\documentclass{article}

\usepackage[main, preprint]{neurips_2026}

\usepackage[utf8]{inputenc} 
\usepackage[T1]{fontenc}    
\usepackage{hyperref}       
\usepackage{url}            
\usepackage{booktabs}       
\usepackage{amsfonts}       
\usepackage{nicefrac}       
\usepackage{microtype}      
\usepackage{xcolor}         

\usepackage{url}            
\usepackage{booktabs}       
\usepackage{amsfonts}       
\usepackage{nicefrac}       
\usepackage{microtype}      
\usepackage[dvipsnames, table]{xcolor}
\definecolor{estcolor}{RGB}{0,102, 128}

\usepackage{amsmath}
\usepackage{amssymb}
\usepackage{mathtools}
\usepackage{amsthm}
\usepackage{bbm}
\usepackage{nicefrac} 
\usepackage{natbib}
\usepackage{tikz}
\usetikzlibrary{fit, positioning}

\RequirePackage{algorithm}
\usepackage{algpseudocode}
\usepackage{wrapfig}
\usepackage{caption}
\usepackage{needspace}
\usepackage{subcaption}
\usepackage{caption}

\newcommand\p{p}

\newcommand\phat{\hat{p}}
\newcommand\qhat{\hat{q}}
\newcommand\E{\mathbb{E}}

\newcommand\KL{\mathrm{KL}}

\newcommand\D{\mathcal{D}}
\renewcommand\L{\mathcal{L}}
\newcommand\Lhat{\hat{\mathcal{L}}}

\newcommand\Etr{\mathcal{E}_{\mathrm{tr}}}
\newcommand\diag{\mathrm{diag}}

\theoremstyle{plain}
\newtheorem{theorem}{Theorem}
\newtheorem{proposition}[theorem]{Proposition}

\newtheorem*{proposition*}{Proposition}

\usepackage{cleveref}
\crefname{assumption}{assumption}{assumptions}
\Crefname{assumption}{Assumption}{Assumptions}

\crefname{proposition}{proposition}{propositions}
\Crefname{proposition}{Proposition}{Propositions}

\crefname{definition}{definition}{definitions}
\Crefname{definition}{Definition}{Definitions}

\crefname{appendix}{appendix}{appendices}
\Crefname{appendix}{Appendix}{Appendices}

\crefname{section}{section}{sections}
\Crefname{section}{Section}{Sections}

\hypersetup{hidelinks}

\title{Environment-Robust Representation Learning \\ with Empirical Bayes}

\author{
  Yuli Slavutsky\\
  Department of Statistics\\
  Columbia University\\
  New York, NY 10027, USA \\
  \texttt{ys3938@columbia.edu} \\
  \And
  Matthew Shen \\
  Department of Statistics\\
  Columbia University\\
  New York, NY 10027, USA \\
  \texttt{ms7079@columbia.edu} \\
  \And
  Bohan Wu \\
  Department of Statistics\\
  Columbia University\\
  New York, NY 10027, USA \\
  \texttt{bw2766@columbia.edu} \\
  \And
  David M. Blei \\
  Departments of Statistics and Computer Science\\
  Columbia University\\
  New York, NY 10027, USA \\
  \texttt{david.blei@columbia.edu} \\
}

\begin{document}

\maketitle

\begin{abstract}
  We consider multi-environment prediction problems. We assume the environments change the distribution of a latent variable, while the mechanisms generating observed covariates and targets remain stable conditional on that variable. For example, hospitals or clinical cohorts may differ in the prevalence of latent patient states, even though the relationships between those states, physiological measurements, and outcomes remain unchanged. 
  Given a dataset from multiple environments, we formulate a Bayesian model for such problems and derive the corresponding variational objective. We show that this objective decomposes into per-environment terms and an additional cross-environment balancing term induced by the model's structure. We use an empirical Bayes method to set the prior and incorporate it into the objective. Based on this objective, we develop an amortized variational algorithm for posterior approximation, and use the resulting learned latent variables to form predictions in new environments.
  We study our approach through simulations and real-world studies of astronomical source identification, microbiome-based disease detection, and ICU sepsis prediction. Across these settings, our method outperforms previous approaches for prediction in new environments.
\end{abstract}

\section{Introduction} \label{sec:intro}

Scientific prediction problems often involve data collected across heterogeneous environments. In astronomical source identification, for example, we observe measurements of objects across different regions of the sky, and the goal is to predict the object type. 
Different sky regions can have different prevalence of latent underlying source types \citep{humphrey2022machine}, while the measurement process and the relationship between source types and labels are stable across regions. In microbiome studies, we observe microbial abundance profiles from patients across different body habitats or study cohorts, and the goal is to predict disease. Different environments can induce different mixture proportions over latent microbial states \citep{human2012framework}, while sequencing and labeling mechanisms are stable across cohorts and habitats.

Motivated by these examples, we develop a probabilistic model 
for multi-environment prediction \citet{peters2016causal}, where each environment captures variation in the distribution of latent variables. Formally, we assume that environments are sampled from $p(e)$ and for each environment, latent variables are sampled from $p(z\mid e)$. An outcome of interest $y$ depends on $z$ through $p(y\mid z)$. We observe a potentially high-dimensional proxy variable $x$, sampled according to $p(x\mid z)$, where the mechanisms $p(x\mid z)$ and $p(y\mid z)$ remain stable across environments.
This corresponds to the probabilistic graphical model in Figure \ref{fig:pgm}.

We observe a training dataset
$\D = \{\{(x_i^e, y_i^e)\}_{i = 1}^{n_e}\}_{e \in \Etr}$, 
drawn across a finite set of training environments 
$\Etr=\{e_1, \dots, e_m\}$. The probability of this dataset is
\begin{equation} \label{eq:Pexy}
\p(\D) = \prod_{j=1}^{m} \p(e_j)\,
\prod_{i=1}^{n_{e_j}}
\int
\p(z_{ij} \mid e_j)\,
\p(y_{ij} \mid z_{ij})\,
\p(x_{ij} \mid z_{ij})\,
dz_{ij}.
\end{equation}

Our goal is to predict an outcome $y$ for a new observation $x$, drawn according to $p(x \mid e=e')$, where $e'$ is a new environment that was not observed during training. 

Since $e' \not \in \Etr$, the environment-specific Bayes target $p(y\mid x,e')=\int p(y\mid z)\,p(z\mid x,e')\,dz$
cannot in general be recovered.
For this reason, we target the marginal Bayes rule $p(y \mid x)$, where 
\begin{equation} \label{eq:marginal}
    p(y\mid x)=\iint  p(z,e\mid x)\, p(y\mid x,z,e) \,dz\,de=\iint p(z\mid x, e)\,p(e\mid x)\, p(y\mid z) \, dz\,de.
\end{equation}

Our target, namely estimating $p(y\mid x)$ for an observation $x$ from an unobserved environment, requires estimating three terms.

The first term is 
\begin{equation}
    p(z \mid x,e) = \int p(y \mid x, e)\, p(z \mid x,y,e) \, dy.
\end{equation}
This requires averaging over the possible labels for the same observation $x$. In the data, however, each observed $x$ is typically paired with only one realization of $y$. 
Consequently, the latent information directly inferred from an observed pair is label-conditioned: it corresponds to $p(z \mid x,y,e)$, rather than to the desired marginal quantity $p(z \mid x,e)$.

Note, however, that $p(y \mid x, e)$ does not include the latent $z$, which allows directly estimating it from the data. 
We approximate  $p(z \mid x,y,e)$ and obtain $p(z \mid x,e)$ by marginalizing over 
$p(y \mid x, e)$. This is an empirical-Bayes \citep{Efron2012Large-ScalePrediction} step: the prior distribution $p(z \mid x, e)$, which depends on the known covariates $x$ but not $y$, is estimated using observed data.

The second term is $p(e\mid x)$. 
A naive estimate is the training indicator of whether $x$ was observed in environment $e$, but this indicator is unavailable for observations from new environments. 
Such estimates are not observation-specific: they do not adapt to a new input $x$, and  cannot estimate how likely each training environment is for that observation.
To address this, we learn a predictor of $p(e\mid x)$ that estimates how likely is an observation $x$ given each training environment.

The third term is $p(y \mid z)$. To this end, we introduce an amortized variational neural model in which $z$ is inferred from the observed data, while $p(y \mid z)$ is parameterized directly as the conditional label distribution given the latent state.

\begin{figure}
    \centering
    \resizebox{0.55\linewidth}{!}{%
    \begin{tikzpicture}
      \tikzset{
        var/.style={circle, draw=black, thin, minimum size=7mm, inner sep=0pt, font=\small},
        grayvar/.style={var, fill=gray!20},
        tinyvar/.style={circle, draw=black, thin, minimum size=2mm, inner sep=0pt},
        plate/.style={draw, rounded corners}
      }

      \node[tinyvar] (u) at (2.1,5.85) {};

      \node[grayvar] (e) at (0,5) {$e_j$};
      \node[var]     (z) at (0,3.6) {$z_{ij}$};
      \node[grayvar] (x) at (-1.1,3) {$x_{ij}$};
      \node[grayvar] (y) at (1.1,3) {$y_{ij}$};

      \draw[->] (u) -| (e);
      \draw[->] (e) -- (z);
      \draw[->] (z) -- (x);
      \draw[->] (z) -- (y);

      \node[
        plate,
        fit=(z)(x)(y),
        inner sep=8pt,
        label={[xshift=-29pt,yshift=-14pt]north east:\footnotesize{$1:n_{e_j}$}}
      ] (innerplate) {};

      \node[
        plate,
        fit=(e)(innerplate),
        inner sep=8pt,
        label={[xshift=-26pt,yshift=-12pt]north east:\footnotesize{$1:m$}}
      ] (outerplate) {};

      \node[var]     (ep) at (3.8,5) {$e$};
      \node[var]     (zp) at (3.8,3.6) {$z$};
      \node[grayvar] (xp) at (2.7,3) {$x$};
      \node[var]     (yp) at (4.9,3) {$y$};

      \draw[->] (u) -| (ep);
      \draw[->] (ep) -- (zp);
      \draw[->] (zp) -- (xp);
      \draw[->] (zp) -- (yp);

      \node[font=\small] at (0,1.8) {Observed data};
      \node[font=\small] at (3.8, 1.8) {Prediction};
    \end{tikzpicture}
    }
    \caption{Probabilistic graphical model of the multi-environment data generative process. Observed variables shown in gray. The top dot encodes shared distribution across environments.}
    \label{fig:pgm}
\end{figure}

Together, these ideas form the Empirical-Bayes Environment-Robust (EBER) method. In what follows, we derive a variational objective and the resulting estimation algorithm. 

We then show that EBER can be interpreted as an \emph{environment-balancing} method, namely a method whose objective combines environment-wise terms with a cross-environment regularization penalty. Unlike  standard environment-balancing methods, this penalty is not pre-specified, but is induced by the probabilistic model. 

Finally, we evaluate EBER on synthetic and real-world datasets, and show that it often improves predictive performance relative to other environment-balancing methods.

\subsection{Related Work} \label{sec:related}

A central line of work in out-of-distribution generalization seeks representations for which the predictive relationship with the target remains stable across environments. These methods typically optimize an objective of the form
\begin{equation} \textstyle
\min_{\theta}
\;
\sum_{e\in\Etr}
\ell_e(f_\theta)
+
\lambda\,\mathcal R(f_\theta,\Etr),
\end{equation}
where $\ell_e(f_\theta)$ is the loss of the representation $f_\theta$ in environment $e$, and $\mathcal R$ encourages some notion of similar behavior across environments. This perspective, underlies invariant representation learning methods such as invariant risk minimization and its variants \citep{IRM, ahuja2020invariant, lu2021invariant,  lin2022bayesian}, causality-oriented robust methods \citep{shen2023causality}, as well as invariant approaches to transfer learning \citep{rojas2018invariant}. Closely related methods, including risk extrapolation and calibration-based objectives, also fit this framework by favoring predictors whose risks or calibration levels remain stable across environments \citep{krueger2021out, wald2021calibration,shi2021gradient}. 

In \S \ref{sec:balancing} we show that our approach belongs to this category of methods. However, rather than predefining a notion of stability across environments and enforcing it through a chosen penalty, we derive the balancing term from the assumed latent-variable model.

A complementary direction is to model environment-specific variation explicitly, rather than penalizing it away. Under different probabilistic assumptions, \citet{slavutsky2026robust} show that explicitly modeling the effect of the environment can improve robust prediction.

Other related lines of work target different notions of stability. These include distributionally robust optimization \citep{DRO, duchi2021statistics, duchi2021learning, wei2023distributionally} and its group variants \citep{sagawa2019distributionally, piratla2021focus}, which optimize performance under worst-case perturbations, rather than the average performance over new environments.

Another related line is latent-variable representation learning across multiple conditions. Several methods based on variational autoencoders \citep{vae} separate shared and condition-specific factors \citep{CVAE, CSVAE, scVI, CCVAE}.  More recently, \citet{wu2026multi} used empirical Bayes for multi-domain causal representation learning under a linear mixing model. However, these approaches impose independence, causal, or prior assumptions that do not match our assumed generative process.

\section{Multi-Environment Variational Learning}

Our goal is to estimate the marginal predictive distribution in Equation \eqref{eq:marginal}.

For the true probability $p$, under the probabilistic graphical model in Figure \ref{fig:pgm} we have that 
\begin{equation}\label{eq:psuedo-elbo}
    \log p(y \mid x) = \E_{p(z \mid x,y)}[\log p(y \mid z)]
    -\KL(p(z \mid x,y)\, \Vert \, p(z \mid x)),
\end{equation}
where 
\begin{align}
\label{eq:posterior}
 & p(z\mid x,y) = \int p(z\mid x,y,e)\,p(e\mid x,y)\,de\\ 
 \label{eq:prior}
 & p(z\mid x) = \int p(z\mid x, e)\,p(e\mid x)\,de. 
\end{align}

Since by Bayes' rule $p(e \mid x, y) \propto p(y \mid x,e) \, p(e \mid x)$, it suffices to estimate (i) the posterior $p(z\mid x,y,e)$, (ii) the conditional distribution of the label given the latent variable $p(y \mid z)$, 
(iii) the conditional distribution of the label given the observation and environment $p(y \mid x,e)$, and 
(iv) the conditional distribution of the environment $p(e \mid x)$. 

Below, we formulate a single objective to estimate all these terms.

Let $q(z ; x,y,e)$ be an environment-specific variational approximation to $p(z \mid x,y,e)$. Using the same marginalization identities as in Equations \eqref{eq:posterior}--\eqref{eq:prior}, this variational posterior induces the global posterior $q(z ; x,y)$ and the label-marginal prior $q(z ; x)$.

Given these induced distributions, the corresponding conditional distribution of $y$ given $x$ is
\begin{equation}
    p_q(y \mid x)
    \coloneqq
    \int p(y \mid z)\, q(z ; x) \, dz .
\end{equation}

Our goal is to maximize the conditional log-likelihood 
\begin{equation}
    \frac{1}{\vert \D \vert}
    \sum_{e \in \Etr}
    \sum_{i=1}^{n_e}
    \log p_q(y_i^e \mid x_i^e),
\end{equation}
where, for each observation,
\begin{equation}
    p_q(y_i^e \mid x_i^e)
    \coloneqq
    \int p(y_i^e \mid z) \, q(z ; x_i^e) \, dz .
\end{equation}
The conditional log-likelihood is generally intractable. However, for any variational family such that
$q(z ; x_i^e,y_i^e)$ is absolutely continuous with respect to $q(z ; x_i^e)$, by Jensen's inequality\footnote{This is a slight variation of the standard ELBO derivation.} we have that 
\begin{align} \notag
    \log p_q(y_i^e \mid x_i^e)
    & =
    \log
    \int
    p(y_i^e \mid z)\, q(z ; x_i^e)\, dz
    =
    \log
    \mathbb E_{q(z ; x_i^e,y_i^e)}
    \left[
        \frac{
            p(y_i^e \mid z)\, q(z ; x_i^e)
        }{
            q(z ; x_i^e,y_i^e)
        }
    \right] \\
    & \geq 
    \mathbb E_{q(z ; x_i^e,y_i^e)}
    \left[
        \log p(y_i^e \mid z)
    \right]
    -
    \KL
    \left(
        q(z ; x_i^e,y_i^e)
        \, \Vert \,
        q(z ; x_i^e)
    \right).
    \label{eq:pq_elbo_single}
\end{align}

Consequently, each term in the empirical conditional log-likelihood admits the lower bound in Equation~\eqref{eq:pq_elbo_single}.

We now specify the parameterized estimators that define the distributions appearing in this bound.

\paragraph{Parameterized Estimators}

We first model the environment-specific variational posterior $q(z ; x,y,e)$ with an amortized network $g_\theta$, parametrized by $\theta$, that takes $(x,y,e)$ as input and outputs the mean and diagonal covariance entries of a Gaussian distribution $g_\theta(x,y,e) = (\mu_\theta(x,y,e),\sigma_\theta(x,y,e))$.
That is, we define
\begin{equation} \label{eq:variational-family} 
q(z ; x,y,e)
=
\mathcal N\!\left(\mu_\theta(x,y,e), \diag(\sigma_\theta(x,y,e))\right).
\end{equation}

Next, we model the conditional distribution $p(y \mid z)$ with a neural predictor $f^{(1)}_\phi$, shared across environments. This defines the estimate $\phat_\phi(y \mid z)$.

For estimation of the environment conditionals, we
specify an auxiliary non-latent neural network $f^{(0)}_\varphi$ that takes $(x,e)$ as input and estimates $\phat_\varphi(y \mid x,e)$.
Similarly, we define a network $h_\psi$ that takes $x$ as input, and outputs an estimate $\phat_\psi(e \mid x)$.
Using Bayes' rule, these two observed-variable conditionals define
\begin{equation}
    \phat_{\varphi,\psi}(e \mid x,y)
    \propto
    \phat_\varphi(y \mid x,e)\,
    \phat_\psi(e \mid x).
\end{equation}

Applying the marginalization identities in Equations \eqref{eq:posterior} - \eqref{eq:prior} 
to the parameterized estimators 
yields
\begin{align}
 & \qhat_{\theta,\varphi,\psi} (z ; x,y) \coloneqq \int \qhat_\theta(z ; x,y,e)\, \phat_{\varphi,\psi}(e\mid x,y)\,de, 
 \quad
 \qhat_{\theta,\varphi,\psi} (z ; x) \coloneqq \int \qhat_{\theta,\varphi}(z ; x, e)\, \phat_\psi(e\mid x)\,de, 
\end{align}
where 
\begin{equation}
    \qhat_{\theta,\varphi}(z ; x, e) \coloneqq \int q_\theta(z ; x,y,e) \, \phat_\varphi(y \mid x, e) \, dy. \label{eq:env_var}
\end{equation}

\paragraph{Optimization Objective}

Substituting these estimates into Equation \eqref{eq:pq_elbo_single}
gives our objective
\begin{equation} \label{eq:empirical}
 \!\!   \widehat{\L}(\theta,\phi,\varphi,\psi)
    \! \coloneqq \!
    \frac{1}{\vert \D \vert} \!
    \sum_{e \in \Etr} \!
    \sum_{i=1}^{n_e} \!
    \left[
    \E_{\qhat_{\theta,\varphi,\psi}(z ; x_i^e,y_i^e)}
    \left[
    \log p_\phi(y_i^e \mid z)
    \right]
    \! - \!
    \KL
    \left(
    \qhat_{\theta,\varphi,\psi}(z ; x_i^e,y_i^e)
    \, \Vert \, 
    \qhat_{\theta,\varphi,\psi}(z ; x_i^e)
    \right)
    \right].
\end{equation}
We learn all the parameters of these estimators jointly by optimizing Equation \eqref{eq:empirical}.

\begin{figure}[ht]
\vspace{-1em}
\centering
\resizebox{0.65\linewidth}{!}{%
\begin{tikzpicture}[>=stealth]

\tikzset{
  circ/.style={circle, draw=black, fill=gray!20,
               minimum size=7mm, inner sep=0pt},
  whitecirc/.style={circle, draw=black,
               minimum size=7mm, inner sep=0pt},
  sqr/.style={rectangle, draw=black,
              minimum height=7mm, minimum width=17mm,
              inner sep=2pt, align=center},
  bluesqr/.style={sqr, fill=NavyBlue!20},
  dashsqr/.style={sqr, dashed}
}

\node[circ]     (x)      at (1,   1.9) {$x$};
\node[circ]     (e)      at (1,   0.9) {$e$};
\node[circ]     (y)      at (1,  -0.1) {$y$};

\node[bluesqr]      (pyxe)   at (-1.5, 1.35) {$p_\varphi(y\mid x,e)$};
\node[bluesqr]  (pex)    at ( 3.1, 2.5) {$\phat(e\mid x)$};
\node[dashsqr]  (pexy)   at ( 6.7, 2.5) {$\phat(e\mid x,y)$};

\node[bluesqr]      (qexy)   at ( 3.1, 0.9) {$q(z ; x,y,e)$};
\node[whitecirc]     (z)      at (4.9, 0.9) {$z$};
\node[bluesqr]  (pyz)    at (6.7, 0.9) {$\hat{p}_\phi(y\mid z)$};

\node[dashsqr]  (qex)    at ( 3.1,-0.65) {$q(z ; x,e)$};

\draw[->] (x.west) -- ++(-0.5,0) |- (pyxe.east);
\draw[->] (e.west) -- ++(-0.5,0) |- (pyxe.east);

\draw[->] (x.east) -- ++(0.5,0) |- (pex.west);

\draw[->] (x.east) -- ++(0.5,0) |- (qexy.north west);
\draw[->] (e.east) -- (qexy.west);
\draw[->] (y.east) -- ++(0.5,0) |- (qexy.south west);

\draw[->] (pex.east)  --  (pexy.west);
\draw[->] (pyxe.north)  -- ++(0.0,1.5) -|  (pexy.north);


\draw[->] (qexy.east) -- (z.west);

\draw[->] (z.east) -- (pyz.west);


\draw[->] (qexy.south) --  (qex.north);
\draw[->] (pyxe.south) |-  (qex.west);
\end{tikzpicture}
}
\caption{Schematic illustration of the training computations. Observed variables are gray circles. Blue rectangles denote quantities estimated directly by neural networks. Dashed rectangles denote derived quantities, computed from other estimates.}
\label{fig:algorithm-diagram}
\end{figure}

Note that $\phat_\psi(e \mid x)$ is learned implicitly: it enters the objective only through the aggregation weights in $\qhat(z ; x)$. In low-signal settings, adding a supervised environment-prediction term
$\lambda_\mathrm{env}\,
    \nicefrac{1}{\vert \D \vert}
    \sum_{e \in \Etr}
    \sum_{i=1}^{n_e}
    \log \phat_\psi(e \mid x_i^e)$
may produce better estimates as it encourages the classifier to maximize the likelihood of the environments explicitly.
We use this term only in the deliberately low-signal simulation in Section \ref{sec:param_sim}, where we set $\lambda_\mathrm{env}=1$; in all other experiments, we found no meaningful difference and set $\lambda_\mathrm{env}=0$.

In practice, we approximate the expectation terms in Equation \eqref{eq:empirical} with a single-sample Monte Carlo estimator using standard Gaussian reparametrization. 
The resulting amortized training procedure is summarized in Algorithm \ref{alg} and is illustrated in Figure \ref{fig:algorithm-diagram}.

\begin{algorithm}[ht]
\caption{}
\label{alg}
{\renewcommand{\baselinestretch}{1.0}\selectfont
\begin{algorithmic}[1]
\Require Dataset $\D$, initial values $(\theta_0,\phi_0,\varphi_0,\psi_0)$, number of iterations $T$, batch size $b$, step size $\eta$
\For{$t=1,\dots,T$}
    \State Sample a minibatch $B=\{(x_i,y_i,e_i)\}_{i=1}^b$ from $\D$
    \For{each $(x_i,y_i,e_i)\in B$}
        \State Compute $q(z ; x_i,y_i,e_i) = g_{\theta_{t-1}}(x_i,y_i,e_i)$
        \State Sample $z_i \sim q(z ; x_i,y_i,e_i)$ by Gaussian reparametrization~\eqref{eq:variational-family}
        \State Evaluate the likelihood term $\p_{\phi_{t-1}}(y_i\mid z_i)$ using $f^{(1)}_{\phi_{t-1}}(z_i)$
        \State For all $e\in\Etr$ compute \vspace{-0.7em}
        \begin{align*}  
         & p_{\varphi_{t-1}}(y_{i}\mid x_{i},e) \text{ using } f^{(0)}_{\varphi_{t-1}}(x_{i},e)\\
         & q(z ; x_{i},e)= \textstyle \int q(z ; x_{i},y,e)\,p_{\varphi_{t-1}}(y\mid x_{i},e)\,dy \\
         & \phat_{\psi_{t-1}}(e\mid x_{i})=h_{\psi_{t-1}}(x_{i})\\
         & \phat(e\mid x_{i},y_{i})\propto\phat_{\psi_{t-1}}(e\mid x_{i})\,p_{\varphi_{t-1}}(y_{i}\mid x_{i},e)
        \end{align*}
        \State Compute
        $
        \qhat(z ; x_i,y_i)
        =
        \sum_{e\in\Etr}\phat(e\mid x_i,y_i)\,q(z ; x_i,y_i,e)
       $
        \State Compute
        $\qhat(z ; x_i)
        =
        \sum_{e\in\Etr}\phat_{\psi_{t-1}}(e\mid x_i)\,q(z ; x_i,e)$
    \EndFor
    \State Form $\Lhat_B$ as the batch approximation of Equation \eqref{eq:empirical}
    \State Update
    $(\theta_t,\phi_t,\varphi_t,\psi_t)
    =
    (\theta_{t-1},\phi_{t-1},\varphi_{t-1},\psi_{t-1})
    +
    \eta \nabla_{\theta,\phi,\varphi,\psi}\Lhat_B$
\EndFor
\State \Return $(\theta_T,\phi_T,\varphi_T,\psi_T)$
\end{algorithmic}
}
\end{algorithm}

\subsection{Prediction}
At prediction time, we observe a new datapoint $x$, but not its environment.
Once training is complete with Algorithm \ref{alg}, prediction for a datapoint $x$, possibly from a new environment $e'$, proceeds by marginalizing over the training environments. Since $e'$ is not observed at test time, we first compute, for each $e\in\Etr$, the environment-specific conditional prior $q(z ; x,e) =\int q(z ; x,y,e)\,p_{\varphi}(y\mid x,e)\,dy$
and the environment weights $\phat_{\psi}(e\mid x)=h_{\psi}(x)$.
We then form the EB-prior 
\begin{equation}\textstyle
\qhat(z ; x)=\sum_{e\in\Etr}\phat_{\psi}(e\mid x)\,q(z ; x,e).
\end{equation}
The predictive distribution for $y$ is obtained by averaging  over this aggregated distribution,
\begin{equation} \textstyle
\phat(y\mid x)
=
\int \phat_{\phi}(y\mid z)\,\qhat(z ; x)\,dz
=
\sum_{e\in\Etr}\phat_{\psi}(e\mid x)
\int \phat_{\phi}(y\mid z)\,q(z ; x,e)\,dz. \label{eq:pred}
\end{equation}

\section{Extrapolation to New Environments} \label{sec:motivating_example}

A central question is why the EBER model can extrapolate to observation $x$ from a new environment $e' \notin \Etr$? 
The answer is that the learned distribution $\phat_\psi(e \mid x)$ defines a prediction rule that adaptively averages over the per-environment effects:
at test time, the model evaluates the latent representation induced by each training environment, and combines these contributions according to how plausible each of them is for the observed covariate $x$.

Specifically, Equation \eqref{eq:pred} can be interpreted as expressing the new observation in terms of its similarity to the training environments: it is a weighted average of environment-specific predictions across the training environments, with weights given by $\phat_\psi(e \mid x)$. 

This averaging of environment-specific predictions over observed environments is the empirical-Bayes step: the prior $\qhat(z\mid x)$ is not prespecified, but estimated from the data.
For connections to EB $g$-modeling, see \citet{laird1978nonparametric}, especially Equation~(3.3) on self-consistent priors.

To gain intuition for the resulting extrapolation, we next present a simple example illustrating the importance of explicitly modeling $p(e \mid x)$ when environments differ in the distribution of latent states.

\paragraph{Motivating Example: Explicit Modeling of Environments} 
Let $e\sim p(e)$ be a random environment, and let $\pi_e\in[0,1]$ be its environment-specific parameter. 
Assume that $z\mid e \sim \operatorname{Bernoulli}(\pi_e)$ and 
\begin{equation}
    p(x\mid z=1)=1+\alpha(2x-1),
    \qquad
    p(x\mid z=0)=1-\alpha(2x-1),
\end{equation}
where $0<\alpha<1$ is a known parameter. The label is a noisy copy of $z$:
\begin{equation}
    p(y=1\mid z=1)=1-\rho_y,
    \qquad
    p(y=1\mid z=0)=\rho_y,
    \qquad
    0<\rho_y<1/2.
\end{equation}
As in the assumed model, $z$ depends on $e$ while $p(x\mid z)$ and $p(y\mid z)$ are stable across environments. 

Suppose we observe $m$ training environments $e_1,\dots,e_m$, with environment-specific parameters $\pi_{e_1},\dots,\pi_{e_m}$.
Even in an idealized setting where the environment-specific posteriors $p(z\vert x,e_j)$ are known, accurate prediction requires weighting training environments according to the observation $x$.

In this example, these weights are not uniform: large values of $x$ are evidence for $z=1$, which makes environments with larger $\pi_{e_j}$ more plausible. Consequently, the correct conditional environment weights $p(e \mid x)$ yield convergence to the population predictive distribution at rate $m^{-1/2}$. In contrast, uniform weights, which ignore the dependence on $x$, converge to a biased limit. 
Formal statements and proofs of these results are provided in Propositions \ref{prop:example_correct} and \ref{prop:example_incorrect} in Appendix \ref{sup:example}. Figure \ref{fig:example} illustrates this phenomenon numerically. Additional details are provided in Appendix \ref{sup:exp_example}.

\begin{figure}
    \centering
    \includegraphics[width=\linewidth]{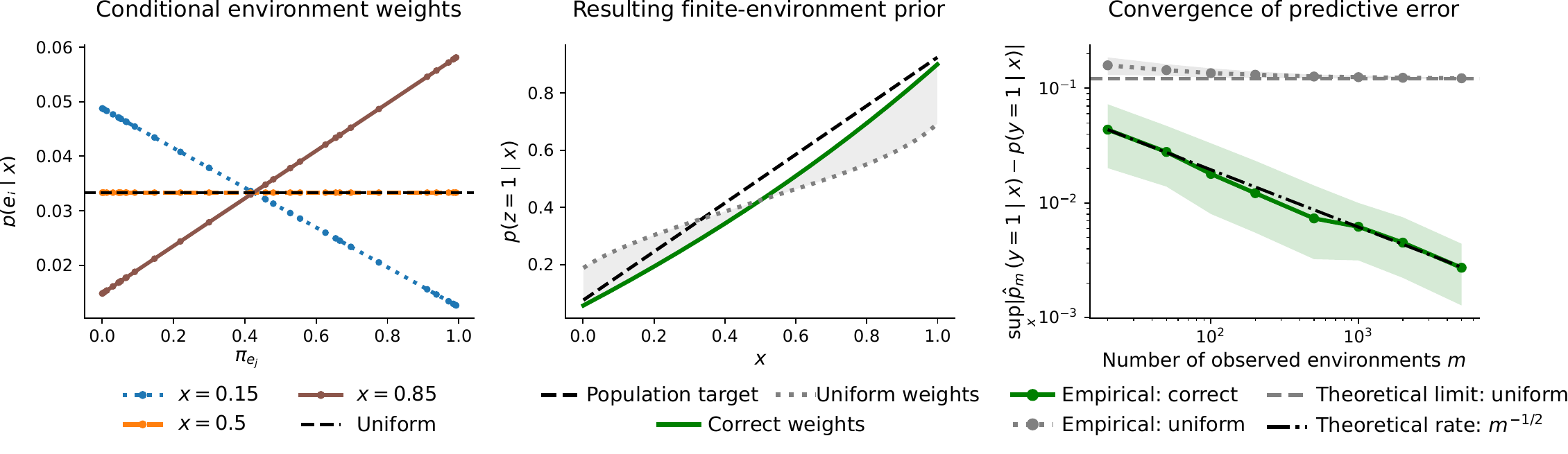}
    \caption{Illustration of the role of conditional environment weights.
Left:  weights $p(e_j\mid x)$ over observed training environments vary with $x$; larger values of $x$ favor environments with larger $\pi_{e_j}$.
Middle: these weights determine the finite-environment approximation to $p(z=1\mid x)$. Correct conditional weighting tracks the population target, whereas uniform weighting is biased.
Right: as the number of observed environments increases, the predictor based on correct weights converges at $m^{-1/2}$ rate, while the uniformly weighted predictor converges to a biased limit.}
    \label{fig:example}
\end{figure}

This example illustrates that accurate prediction requires approximation of both the environment-specific latent distributions $p(z\mid x, e)$, and the observation-specific environment-weights $p(e\mid x)$, beyond naive empirical frequencies that yield non-adaptive uniform weights.

\section{EBER as an Environment-Balancing Method} \label{sec:balancing}
We now show that EBER can be framed as an environment-balancing method. Specifically, 
the expected value of the lower bound in Equation \eqref{eq:pq_elbo_single} decomposes into an average of environment-specific terms, and an additional 
cross-environment discrepancy term that compares the environment-specific variational components with their global 
counterparts. The next theorem makes this decomposition precise.

\begin{theorem} \label{prop:population}
Let
\begin{equation*}
\L_e
\coloneqq
\E_{p(x,y \mid e)}
\left[
\E_{q(z ; x,y,e)}[\log p(y \mid z)]
-
\KL(q(z ; x,y,e)\, \Vert \, q(z ; x,e))
\right],
\end{equation*}
and denote $\L \coloneqq \E_{p(x,y)}[\L(x,y)]$.
Then, $\L
 = 
\E_{p(e)}
\left[
\L_e
\right]
+
R(q)$
where
\begin{align*}
R(q) \coloneqq
\E_{p(x,y)}
\Big[
\E_{p(e\vert x,y)}[\KL(q(z ; x,y,e)\, \Vert \, q(z ; x,y))]
\Big]  - 
\E_{p(x)}
\Big[
\E_{p(e\vert x)}[\KL(q(z ; x,e)\, \Vert \, q(z ; x))]
\Big].
\end{align*}
\end{theorem}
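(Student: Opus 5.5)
The plan is to work with the population version of the per-observation bound in Equation~\eqref{eq:pq_elbo_single}. I read $\L(x,y)$ as
\[
\L(x,y)\coloneqq \E_{q(z;x,y)}[\log p(y\mid z)]-\KL(q(z;x,y)\,\Vert\,q(z;x)),
\]
where $q(z;x,y)=\int q(z;x,y,e)\,p(e\mid x,y)\,de$ and $q(z;x)=\int q(z;x,e)\,p(e\mid x)\,de$ are built with the true weights, as in Equations~\eqref{eq:posterior}--\eqref{eq:prior}. Likewise $q(z;x,e)=\int q(z;x,y,e)\,p(y\mid x,e)\,dy$. The proof has three steps: split the likelihood term and the KL term at the level of a fixed $(x,y)$, then integrate using the factorization
\[
p(x,y)\,p(e\mid x,y)=p(e)\,p(x,y\mid e)=p(x)\,p(e\mid x)\,p(y\mid x,e),
\]
which follows from Bayes' rule.

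\emph{Likelihood term.} This term is linear in the mixing distribution, so
\[
\E_{q(z;x,y)}[\log p(y\mid z)]=\E_{p(e\mid x,y)}\E_{q(z;x,y,e)}[\log p(y\mid z)].
\]
Taking $\E_{p(x,y)}$ and swapping to $\E_{p(e)}\E_{p(x,y\mid e)}$ gives exactly the likelihood part of $\E_{p(e)}[\L_e]$.

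\emph{KL term, first decomposition.} For a mixture $\bar q=\E_w[q_e]$ and any reference $r$, the compensation identity holds:
\[
\E_w\KL(q_e\Vert r)=\KL(\bar q\Vert r)+\E_w\KL(q_e\Vert \bar q).
\]
It is proved by writing $\log(q_e/r)=\log(q_e/\bar q)+\log(\bar q/r)$ and using linearity in $w$. Applying it with $w=p(e\mid x,y)$, $q_e=q(z;x,y,e)$ and $r=q(z;x)$ gives
\[
-\KL(q(z;x,y)\Vert q(z;x))=-\E_{p(e\mid x,y)}\KL(q(z;x,y,e)\Vert q(z;x))+\E_{p(e\mid x,y)}\KL(q(z;x,y,e)\Vert q(z;x,y)).
\]
After taking $\E_{p(x,y)}$, the last piece is precisely the positive part of $R(q)$.

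\emph{KL term, second decomposition.} This is the step I expect to need the most care. Split
\[
\KL(q(z;x,y,e)\Vert q(z;x))=\KL(q(z;x,y,e)\Vert q(z;x,e))+\E_{q(z;x,y,e)}\big[\log q(z;x,e)-\log q(z;x)\big].
\]
Integrate against $p(x)p(e\mid x)p(y\mid x,e)$, using the factorization above. The inner average over $y$ of $q(z;x,y,e)$ collapses to $q(z;x,e)$, because the integrand no longer depends on $y$. The cross term therefore becomes $\E_{p(x)}\E_{p(e\mid x)}\KL(q(z;x,e)\Vert q(z;x))$. The first piece, after switching to $\E_{p(e)}\E_{p(x,y\mid e)}$, is the KL part of $\L_e$.

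Collecting signs yields $\L=\E_{p(e)}[\L_e]+R(q)$.

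The main obstacle is technical rather than algebraic. Every rearrangement must be justified by absolute continuity, namely $q(z;x,y,e)\ll q(z;x,e)\ll q(z;x)$, which holds automatically for the Gaussian family in Equation~\eqref{eq:variational-family}. We also need finiteness and integrability of the log-ratios so that Fubini applies and no $\infty-\infty$ appears. I would state these as standing regularity assumptions. I would also stress that the weights in the mixtures must be the true $p(e\mid x,y)$ and $p(e\mid x)$, since both the $y$-collapse and the change of measure rely on that.
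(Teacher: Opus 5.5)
Your proposal is correct and follows essentially the same route as the paper's proof. Both use the mixture identity for $\KL$ under $p(e\mid x,y)$ to peel off the posterior discrepancy term, then split $\KL(q(z;x,y,e)\Vert q(z;x))$ through $q(z;x,e)$ and collapse the $y$-average, with Bayes-rule factorizations to reorder the integrals. The only cosmetic difference is that the paper fixes $e$ and integrates against $p(x,y\mid e)$ before converting $p(e)p(x\mid e)$ to $p(x)p(e\mid x)$, whereas you integrate directly against $p(x)p(e\mid x)p(y\mid x,e)$.
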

The proof is provided in Appendix \ref{proof:population}. 
Theorem \ref{prop:population} shows that EBER induces an environment balancing term that follows directly from the latent-variable model. 
This balancing term $R(q)$ is composed from two discrepancy terms. The first compares the environment-specific posteriors $q(z ; x,y,e)$ with the global posterior $q(z ; x,y)$. The second compares the corresponding environment-specific priors $q(z ; x,e)$ with the global prior $q(z ; x)$. The balancing term is large when posterior discrepancy across environments exceeds the discrepancy already present in the corresponding priors.

The decomposition in Theorem \ref{prop:population} also shows why naive estimation of  $\phat(e \mid x,y)$ and $\phat(e \mid x)$ directly from empirical frequencies in the training data is inadequate.
In continuous settings, each observed pair $(x_i,y_i)$ typically appears in only one environment $e_i$. Consequently, the empirical conditionals collapse to point masses yielding $\qhat(z ; x_i,y_i)=q(z ; x_i,y_i,e_i)$ and $\qhat(z ; x_i)=q(z ; x_i,e_i)$.
Consequently, both cross-environment discrepancy terms in $R(q)$ vanish, and the objective reduces to an average of environment-specific objectives. 
In this collapsed regime, the aggregation becomes trivial and the environment weights $p(e \mid x)$ and $p(e \mid x,y)$  are not learned or used during training.

However, this collapsed objective is incompatible with the intended test-time use. At test-time, the environment label is unobserved, so prediction must aggregate the environment-specific quantities $q(z ; x,e)$ using the observation-specific weights $\phat(e \mid x)$. If training uses empirical point-mass weights, each observation is assigned only to its observed environment, so the model is never forced to learn how environment-specific components should be combined for a new $x$. Preserving the global objective therefore requires modeling the environment weights explicitly: $\hat p(e\mid x,y)$ determines how environment-specific posteriors are combined during training, while $\hat p(e\mid x)$ determines how environment-specific priors are combined during both training and prediction.

\section{Empirical Results}

We evaluate EBER on synthetic and real-world multi-environment classification tasks. The synthetic experiments are designed to match the assumed structure. The real-world experiments test the same principle in settings with naturally occurring environments.

We compare EBER with empirical risk minimization (ERM) and with environment-balancing methods: IRM \citep{IRM}, VREx \citep{krueger2021out}, and Fishr \citep{rame2022fishr} (see details in \S \ref{sec:related}). 
Within each experiment, all methods use the same train-test environment split and model. We test each methods ability to form accurate predictions on held-out environments. 

Across all 5 experiments, EBER is the strongest method overall. In the parametric simulation, it achieves the lowest NLL and highest accuracy, with accuracy nearly matching the Bayes-optimal predictor, and remains best across the sensitivity and ablation studies. The same pattern holds in the colored MNIST simulation. On the real-world tasks, EBER again performs best, with substantial improvements over all competitors in quasar-star and over all competitors except VREx in microbiome. In the sepsis task, EBER achieves the highest AUROC and AUPRC, with substantial AUROC gains over all methods except IRM. An additional error-set analyses shows that EBER's mistakes are not concentrated on examples that are consistently easy for competing methods.

For every experiment we perform one-sided t-test with Benjamini–Yekutieli correction for multiple comparisons and report formal statistical significance levels.
Additional details for all experiments are provided in Appendix \ref{sup:exp}, and additional results in Appendix \ref{sup:results}.

\subsection{Parametric Simulation} \label{sec:param_sim}

We first study a controlled setting. For each environment $j$, we assign a branch index $b_j$ and draw an environment-specific branch center $m_j=\Delta b_j+\phi_0+\delta_j$ with $\delta_j\sim\mathcal{N}(0,\sigma_\delta^2)$.
Then, for observation $i$ in environment $j$, we sample $z_{ij}\mid e_j \sim \mathcal{N}(m_j,\sigma_z^2)$.
The observed covariates are generated as
\begin{equation}
x_{ij,1}
=
\cos(z_{ij})+\varepsilon_{ij,1},
\qquad
x_{ij,2}
=
\sin(z_{ij})+\varepsilon_{ij,2},
\qquad
x_{ij,3}
=
\rho z_{ij}+\varepsilon_{ij,3},
\end{equation}
and $x_{ij,k}=\varepsilon_{ij,k}$ for $k=4,\ldots,d$. The first two coordinates reveal the phase of $z_{ij}$ but not its branch, while the third coordinate provides a noisy branch cue. The target is generated by the stable latent mechanism
$
y_{ij}\mid z_{ij}
\sim
\mathrm{Bernoulli}
\left(
\sigma\left(\beta\sin(z_{ij}/2)\right)
\right).
$

\begin{table}[ht]
    \centering
    \small
    \begin{tabular}{lccc}
    \toprule
    Method & NLL $\downarrow$ & Accuracy $\uparrow$ & PV (NLL) \\
    \midrule
    Bayes-x & 0.233 $\pm$ 0.004 & 0.934 $\pm$ 0.002 & -- \\
    \midrule
    EBER & \textbf{0.319 $\pm$ 0.069} & \textbf{0.928 $\pm$ 0.003} & -- \\
    ERM & 0.652 $\pm$ 0.005 & 0.577 $\pm$ 0.009 & 0.015 \\
    IRM & 0.654 $\pm$ 0.006 & 0.573 $\pm$ 0.012 & 0.015 \\
    V-REx & 0.640 $\pm$ 0.026 & 0.572 $\pm$ 0.016 & 0.015 \\
    Fishr & 0.652 $\pm$ 0.006 & 0.582 $\pm$ 0.011 & 0.015 \\
    \bottomrule
    \end{tabular}
    \vspace{0.5em}
    \caption{Parametric simulation results. EBER performs best among all methods and nearly matches optimal-Bayes accuracy.}
    \label{tab:parametric}
\end{table}

\textbf{Sensitivity and ablation analysis} \hspace{0.5em}
We vary branch-cue strength $\rho$, the number of training environments $m_{\mathrm{train}}$, number of observations per environment $n$, and data dimension $d$.
We compare ERM, IRM, VREx, Fishr, EBER, and EBER variants that replace learned components with oracle values. 

\textbf{Results} \hspace{0.5em} Table \ref{tab:parametric} shows results for the baseline parameters. EBER yields the highest accuracy and the lowest NLL among all methods.
The improvement is statistically significant relative to all methods with $p=0.015$.  
Across 8 sensitivity studied and 6 ablation studies, EBER remains best overall among methods that we implemented.
Detailed results and analysis of these studies are reported in Appendix \ref{sup:ablation}. 

\subsection{Colored MNIST Simulation}

To illustrate the setting targeted by our method, we construct a variant of the colored MNIST experiment \citep{IRM} in which the environment affects the data only through the distribution of a latent prototype variable $z \in \{1,2,3,4\}$. We generate data according to
\begin{equation*}
e \sim p(e), \qquad
z \sim p(z\vert e), \qquad
y \sim p(y\vert z), \qquad
d \sim p(d\vert z), \qquad
c \sim p(c\vert z), \qquad
x \sim p(x\vert d,c),
\end{equation*}

\begin{table}[ht]
    \centering
    \small
    \begin{tabular}{lccc}
    \toprule
    Method & NLL $\downarrow$ & Accuracy $\uparrow$ & PV (NLL) \\
    \midrule
    EBER & \textbf{0.519 $\pm$ 0.077} & \textbf{0.787 $\pm$ 0.071} & -- \\
    ERM & 0.863 $\pm$ 0.090 & 0.650 $\pm$ 0.021 & 0.004 \\
    IRM & 0.572 $\pm$ 0.007 & 0.747 $\pm$ 0.017 & 0.203 \\
    V-REx & 0.605 $\pm$ 0.019 & 0.684 $\pm$ 0.011 & 0.140 \\
    Fishr & 0.917 $\pm$ 0.061 & 0.639 $\pm$ 0.020 & 0.004 \\
    \bottomrule
    \end{tabular}
    \vspace{0.5em}
    \caption{Colored MNIST simulation results. EBER performs best among all methods.}
    \label{tab:cmnist}
\end{table}

where $y \in \{0,1\}$ is the binary label, $d \in \{0,\dots,9\}$ is the digit, $c \in \{\mathrm{red},\mathrm{green}\}$ is the color, and $x$ is the corresponding colored image.
The environment affects only the latent prototype mixture $p(z\vert e)$; it does not directly alter the label mechanism or image rendering.
Given $z$, the digit identity is sampled from a prototype-specific distribution over digits, so $d$ carries no environment-specific information beyond $z$.

The four prototypes have stable color and label tendencies: $z\in\{1,2\}$ is mostly associated with label $0$, while $z\in\{3,4\}$ is mostly associated with label $1$; $z\in\{1,3\}$ is mostly red, while $z\in\{2,4\}$ is mostly green.
Each prototype also has a distinct, overlapping distribution over digit identities $p(d\vert z)$.
Thus, training environments differ only through the prototype mixture $p(z\vert e)$.
We generate six training environments and one test environment whose prototype mixture reverses the induced color-label association.
Additional details are provided in Appendix \ref{sup:mnist_details}.
 
\textbf{Results} \hspace{0.5em} Table \ref{tab:cmnist} shows that EBER yields the lowest NLL and highest accuracy among all methods.

\subsection{Quasar-Star Classification}
 %
\begin{figure}
    \centering
    \includegraphics[width=\linewidth]{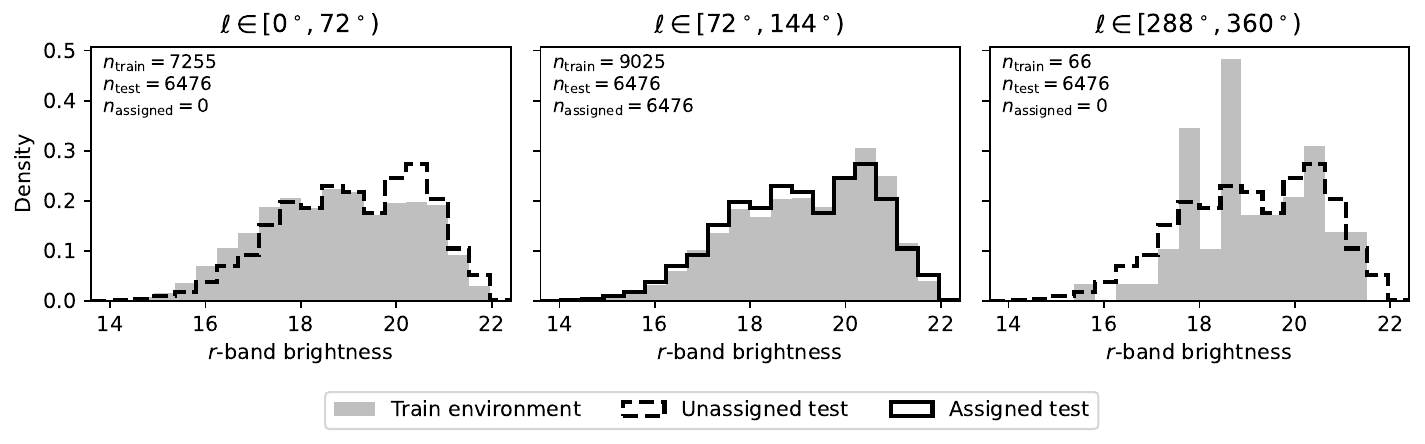}
\caption{Quasar-star $r$-band brightness distributions by training environment.
Gray histograms show the distribution in each training environment indexed by $\ell$.
For each test point, we assign the point to $\arg\max_e \hat p(e \mid x_{\mathrm{test}})$.
All test points are assigned to training environment $\ell \in [72^\circ,144^\circ)$, whose distribution most closely aligns with the test distribution; unassigned environments show poorer agreement with the test distribution.}
\label{fig:dists_partial}
\end{figure}
We use the publicly available Sloan Digital Sky Survey dataset \citep{ahumada202016th}. The covariates $x$ are photometric color features, the label $y$ is quasar status, and the environment $e$ is the sky-position group defined by binned absolute Galactic latitude $\vert b\vert$.

\textbf{Results} \hspace{0.5em}
Table \ref{tab:stars} shows that EBER yields the highest accuracy and the lowest NLL among all methods.
The improvement is statistically significant relative to all methods with $p<0.001$.

Figure \ref{fig:SDSS_agreement} examines the subset of test points on which EBER is incorrect and reports how often the competing methods classify these points correctly. No competing method performs consistently well on this subset, indicating that EBER's errors are not concentrated on examples that are systematically easy for the other methods.

We additionally analyze how the environments are assigned by the learned model . For all test points, EBER assigns the largest estimated weight $\hat p(e\mid x)$ to the same training environment. Figure \ref{fig:dists_partial} compares the train and test distributions for the most variable feature; analogous comparisons for additional features are reported in Appendix Figure \ref{fig:dists_full}. The results show that EBER assigns highest weight to the environment whose distribution is most similar to the test distribution.

\begin{table}[ht]
\centering
\begin{tabular}{lccc}
\hline
Method & NLL $\downarrow$ & Accuracy $\uparrow$ & PV (NLL) \\
\hline
EBER & 0.221 $\pm$ 0.019 & 0.921 $\pm$ 0.007 & -- \\
ERM & 0.390 $\pm$ 0.001 & 0.897 $\pm$ 0.001 & $< 0.001$ \\
Fishr & 0.384 $\pm$ 0.004 & 0.894 $\pm$ 0.001 & $< 0.001$ \\
IRM & 0.383 $\pm$ 0.009 & 0.891 $\pm$ 0.003 & $< 0.001$ \\
VREx & 0.382 $\pm$ 0.009 & 0.892 $\pm$ 0.003 & $< 0.001$ \\
\hline
\end{tabular}
\vspace{0.5em}
\caption{Quasar-star classification results. EBER performs best among all methods.}
\label{tab:stars}
\end{table}

\begin{table}[ht]
\centering
\begin{tabular}{lccc}
\hline
Method & NLL $\downarrow$ & Accuracy $\uparrow$ & PV (NLL) \\
\hline
EBER & 0.850 $\pm$ 0.112 & 0.591 $\pm$ 0.027 & -- \\
ERM & 1.685 $\pm$ 0.136 & 0.587 $\pm$ 0.016 & 0.002 \\
Fishr & 1.449 $\pm$ 0.226 & 0.582 $\pm$ 0.021 & 0.008 \\
IRM & 1.548 $\pm$ 0.224 & 0.588 $\pm$ 0.011 & 0.007 \\
VREx & 0.903 $\pm$ 0.087 & 0.571 $\pm$ 0.014 & 0.543 \\
\hline
\end{tabular}
\vspace{0.5em}
\caption{Microbiome classification results. EBER performs best among all methods.}
\label{tab:microbiome}
\end{table}

\subsection{Microbiome Classification}
We use the publicly available MicrobiomeHD datasets \citep{duvallet2017meta} for a colorectal-cancer microbiome classification task. Here, covariates $x$ are genus-level abundance profiles, $y$ is colorectal-cancer status, and environments $e$ are study cohorts. Appendix~\ref{sup:microbiom_details} provides additional details. 

\textbf{Results} \hspace{0.5em}
Table \ref{tab:microbiome} shows that EBER achieves the highest accuracy and the lowest NLL among all methods. The improvement is statistically significant with $p<0.01$ relative to all methods except VREx.
The results in Figure \ref{fig:ZENODO_agreement} further show that here as well, no competing method performs consistently well on the subset of test points on which EBER is incorrect.

\subsection{ICU Sepsis Prediction}
We evaluate all methods on a multi-environment sepsis prediction task derived from the PhysioNet 2019 Challenge data \citep{reyna2020early}. Each example corresponds to one ICU stay. Environments are defined by the public data source and the recorded ICU unit, serving as coarse hospital-systems. Covariates $x$ are demographic variables and summaries of vital signs and laboratory measurements from the first 24 hours, labels $y$ indicate whether a patient becomes septic later in the stay, and environments $e$ are hospital-system groups, defined by the public data source and recorded ICU unit.

For each ICU stay, we use measurements from the first 24 hours and predict whether the patient becomes septic later in the stay. Patients whose sepsis label becomes positive during the first 24 hours are excluded. The covariates consist of demographic variables and summaries of early vital signs and laboratory measurements. 

\begin{table}[ht]
    \centering
    \small
    \begin{tabular}{lccc}
    \toprule
    Method & AUROC $\uparrow$ & AUPRC $\uparrow$ & PV (AUROC) \\
    \midrule
    EBER & \textbf{0.740 $\pm$ 0.008} & \textbf{0.179 $\pm$ 0.008} & -- \\
    ERM & 0.692 $\pm$ 0.017 & 0.153 $\pm$ 0.013 & 0.003 \\
    Fishr & 0.697 $\pm$ 0.014 & 0.156 $\pm$ 0.008 & 0.002 \\
    IRM & 0.724 $\pm$ 0.012 & 0.169 $\pm$ 0.012 & 0.112 \\
    VREx & 0.697 $\pm$ 0.012 & 0.154 $\pm$ 0.011 & 0.001 \\
    \bottomrule
    \end{tabular}
    \vspace{0.5em}
    \caption{Sepsis prediction results. EBER performs best among all methods.}
    \label{tab:sepsis}
\end{table}
%
We train on five source-unit environments and evaluate on a held-out source-unit environment. Since sepsis is rare, we evaluate predictions using AUROC and AUPRC.

\textbf{Results} \hspace{0.5em}  Table \ref{tab:sepsis} shows that EBER achieves the highest AUROC and AUPRC among all methods. The improvement is statistically significant with $p<0.004$ relative to all methods except IRM. 

\section{Discussion} \label{sec:discussion}
We studied multi-environment prediction problems where environments change the distributions of the latent variables, while the covariate and label mechanisms remain stable. Our goal was to recover the marginal Bayes rule $p(y\mid x)$,  by averaging over latent and environment-specific variables.

Our main contribution is an empirical-Bayes variational method for carrying out this marginalization. Our analysis shows that the corresponding objective induces a specific balancing term: the relevant penalty is a discrepancy term that is implied by the latent-variable model itself. The analysis also shows that generalization to new environments requires explicit estimation of the environment effect through $\phat(e\mid x)$ and $\phat(e\mid x,y)$. The empirical results show that our method produces substantially better predictive performance compared to other methods.

Unlike other variational approaches, our method does not require prior specification, instead it uses the data to inform the prior. However, computing the aggregate distributions $\qhat(z\mid x,y)$ and $\qhat(z\mid x)$ increases the computational cost. 
A promising direction for future work is approximating these aggregates with a VampPrior-inspired mixture \citep{tomczak2018vae} built from a smaller number of learned pseudo-observations.

\clearpage 
\newpage
\setcitestyle{numbers}
\bibliographystyle{plainnat}
\bibliography{references}
\newpage

\appendix

\section{Formal Analysis of the Motivating Example} \label{sup:example}

Here we provide the formal statements for the motivating example in Section \ref{sec:motivating_example}. 

Recall the setting in Section \ref{sec:motivating_example}:  $e\sim p(e)$, where $\pi_e\in[0,1]$ are environment-specific parameters. 
We assume that $z\mid e \sim \operatorname{Bernoulli}(\pi_e)$ and 
\begin{equation}
    p(x\mid z=1)=1+\alpha(2x-1),
    \qquad
    p(x\mid z=0)=1-\alpha(2x-1),
\end{equation}
where $0<\alpha<1$ is a known parameter, and that label is a noisy copy of $z$:
\begin{equation}
    p(y=1\mid z=1)=1-\rho_y,
    \qquad
    p(y=1\mid z=0)=\rho_y,
    \qquad
    0<\rho_y<1/2.
\end{equation}

Suppose we observe $m$ training environments $e_1,\dots,e_m$, with environment-specific parameters $\pi_{e_1},\dots,\pi_{e_m}$.
Let $\mu \coloneqq \E_{p(e)}[\pi_e]$ and
 $\hat\mu_m \coloneqq \frac{1}{m}\sum_{j=1}^m \pi_{e_j}$.
Then, the global prior and its estimator (obtained by averaging over the $m$ observed environments) are
\begin{equation*}
p(z\!=\!1\vert x)
    \!=\!
    \frac{
        \mu p(x\vert z\!=\!1)
    }{
        \mu p(x\vert z\!=\!1)\!+\!(1\!-\!\mu)p(x\vert z\!=\!0)
    },
    \quad
    \hat p_m(z\!=\!1\vert x)
    \!=\!
    \frac{
        \hat\mu_m p(x\vert z\!=\!1)
    }{
        \hat\mu_m p(x\vert z\!=\!1)\!+\!(1\!-\!\hat\mu_m)p(x\vert z\!=\!0)
    }.
\end{equation*}

Since $p(y\mid z)$ is known, the predictive distribution and its finite-environment estimator are 
\begin{equation}
     p(y=1\mid x)
    =
    \rho_y+(1-2\rho_y)p(z=1\mid x),
    \quad
    \hat p_m(y=1\mid x)
    =
    \rho_y+(1-2\rho_y)\hat p_m(z=1\mid x).
\end{equation}

\paragraph{Correct conditional environment weights.}
We first analyze the estimator $\hat p_m(y=1\mid x)$ obtained by averaging over the observed environments using the correct conditional environment weights. In this case, the only source of error is the error from replacing the population average $\mu=\E_{p(e)}[\pi_e]$ by its empirical version $\hat\mu_m$. As a result, the finite-environment Bayes estimator $\hat p_m(y = 1 \mid x)$ converges to the population predictive distribution $p(y = 1 \mid x)$ at a rate of $m^{-1/2}$. This is formalized in the next proposition.

\begin{proposition} \label{prop:example_correct}
For every $\epsilon>0$ and $\delta\in(0,1)$, to guarantee
\begin{equation}
    \Pr
    \left(
    \sup_{x\in[0,1]}
    \vert \hat p_m(y=1\mid x)-p(y=1\mid x)\vert
    \le
    \epsilon
    \right)
    \ge
    1-\delta,
\end{equation}
it is sufficient that
\begin{equation}
    m
    \ge
    \frac{
        (1-2\rho_y)^2\kappa^2
    }{
        2\epsilon^2
    }
    \log\frac{2}{\delta}, 
    \qquad \text{with} \quad 
    \kappa \coloneqq \frac{1+\alpha}{1-\alpha}.
\end{equation}
\end{proposition}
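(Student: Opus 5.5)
The plan is to write the predictive error as a Lipschitz function of $\hat\mu_m-\mu$ and then apply Hoeffding's inequality to the bounded i.i.d.\ variables $\pi_{e_1},\dots,\pi_{e_m}\in[0,1]$.

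First reduce to the latent posterior. Since $\hat p_m(y=1\mid x)-p(y=1\mid x)=(1-2\rho_y)\bigl(\hat p_m(z=1\mid x)-p(z=1\mid x)\bigr)$, it suffices to control the latent error uniformly in $x$.

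Next write $a(x)\coloneqq p(x\mid z=1)=1+\alpha(2x-1)$ and $b(x)\coloneqq p(x\mid z=0)=1-\alpha(2x-1)$. Both lie in $[1-\alpha,1+\alpha]$ for $x\in[0,1]$. Consider the map
\begin{equation*}
F_x(t)=\frac{t\,a}{t\,a+(1-t)\,b},\qquad t\in[0,1].
\end{equation*}
Its derivative is
\begin{equation*}
F_x'(t)=\frac{ab}{\bigl(ta+(1-t)b\bigr)^2}.
\end{equation*}
The denominator is at least $\min(a,b)^2$ and the numerator is at most $\max(a,b)\min(a,b)$, so
\begin{equation*}
F_x'(t)\le \frac{\max(a,b)}{\min(a,b)}\le \frac{1+\alpha}{1-\alpha}=\kappa .
\end{equation*}
This bound is uniform in $x$ and $t$. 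By the mean value theorem (both $\mu$ and $\hat\mu_m$ lie in $[0,1]$),
\begin{equation*}
\sup_x\bigl|\hat p_m(z=1\mid x)-p(z=1\mid x)\bigr|\le \kappa\,|\hat\mu_m-\mu| .
\end{equation*}
Combining the two steps gives $\sup_x|\hat p_m(y=1\mid x)-p(y=1\mid x)|\le (1-2\rho_y)\kappa\,|\hat\mu_m-\mu|$. The uniformity over $x$ thus comes for free, because the randomness enters only through the scalar $\hat\mu_m$.

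Finally apply Hoeffding's inequality, treating the $e_j$ as i.i.d.\ draws from $p(e)$:
\begin{equation*}
\Pr\bigl(|\hat\mu_m-\mu|>\eta\bigr)\le 2\exp(-2m\eta^2).
\end{equation*}
Set $\eta=\epsilon/\bigl((1-2\rho_y)\kappa\bigr)$ and require $2\exp(-2m\eta^2)\le\delta$. Solving gives exactly the stated condition
\begin{equation*}
m\ge \frac{(1-2\rho_y)^2\kappa^2}{2\epsilon^2}\log\frac{2}{\delta}.
\end{equation*}

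The main step to get right is the uniform Lipschitz bound. The mean value theorem must be applied on $[0,1]$ so that the case $t\in\{0,1\}$ is covered. A sharper constant is available, but the crude bound $\max/\min\le\kappa$ already yields the stated sample size.
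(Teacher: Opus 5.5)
Your proposal is correct and follows essentially the same route as the paper's proof: you reduce to the latent posterior via the factor $(1-2\rho_y)$, get a uniform Lipschitz bound $\kappa$ on $f_x(a)$ in $a$, and then apply Hoeffding to $\hat\mu_m$ with threshold $\epsilon/((1-2\rho_y)\kappa)$. Your explicit bound of $ab$ over $\min(a,b)^2$ by $\max(a,b)/\min(a,b)$ also justifies the constant $\kappa$ (rather than the cruder $\kappa^2$ that bounding numerator and denominator separately would give), a step the paper states without detail.
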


The proof is provided in Appendix \ref{proof:example_correct}.

The same conclusion, however, does not hold if we replace the conditional environment weights by uniform weights over the observed environments.
We analyze this case next.

\paragraph{Uniform environment weights.}

Consider now the approximation obtained by replacing the conditional environment weights with uniform weights over the observed training environments. This changes the estimator from a plug-in estimate of $f_x(\mu)$ to an empirical average of $f_x(\pi_{e_j})$. Because $f_x$ is nonlinear whenever $x\neq 1/2$, these two quantities have different population limits in general. The next proposition formalizes this bias.

\begin{proposition} \label{prop:example_incorrect}
Assume that $\operatorname{Var}_{p(e)}(\pi_e)>0$. Then, for every fixed $x\in[0,1]$ such that $x\neq 1/2$,
\begin{equation}
    \tilde p_m(y=1\vert x)
    \longrightarrow
    \rho_y+(1-2\rho_y) \,
    \E_{p(e)}
    \left[
    \frac{
        \pi_e \,p(x\vert z=1)
    }{
        \pi_e \, p(x\vert z=1)+(1-\pi_e)\, p(x\vert z=0)
    }
    \right]
\end{equation}
almost surely, and the limiting value is different from $p(y=1\vert x)$. Consequently, for every
\begin{equation}
    0<\epsilon<
    \left\vert
    \rho_y+(1-2\rho_y)
    \E_{p(e)}
    \left[
    \frac{
        \pi_e \, p(x\vert z=1)
    }{
        \pi_e \, p(x\vert z=1)+(1-\pi_e)\,p(x\vert z=0)
    }
    \right]
    -
    p(y=1\vert x)
    \right\vert,
\end{equation}
we have that as $m \to \infty $, 
\begin{equation}
    \Pr
    \left(
    \vert
    \tilde p_m(y=1\vert x)-p(y=1\vert x)
    \vert
    >
    \epsilon
    \right)
    \longrightarrow 1.
\end{equation}
\end{proposition}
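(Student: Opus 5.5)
We write $f_x(\pi) \coloneqq \pi p_1/(\pi p_1 + (1-\pi)p_0)$ with $p_1 = p(x\mid z=1)$ and $p_0 = p(x\mid z=0)$, both evaluated at the fixed $x$. The uniform-weight estimator is then
\[
\tilde p_m(y=1\mid x) = \rho_y + (1-2\rho_y)\,\frac1m\sum_{j=1}^m f_x(\pi_{e_j}),
\]
where the environments $e_j$ are drawn i.i.d.\ from $p(e)$. The population target is $p(y=1\mid x) = \rho_y + (1-2\rho_y) f_x(\mu)$. The proof has three steps: almost sure convergence by the strong law, strict separation of the limit from the target by Jensen's inequality, and the probability statement as a consequence.

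\textbf{Convergence.} For $x\in[0,1]$ and $0<\alpha<1$ we have $p_0,p_1\in[1-\alpha,1+\alpha]$, so both are strictly positive. Hence $f_x$ is continuous and takes values in $[0,1]$ on $[0,1]$. The variables $f_x(\pi_{e_j})$ are therefore i.i.d.\ and bounded. The strong law of large numbers gives $\frac1m\sum_j f_x(\pi_{e_j}) \to \E_{p(e)}[f_x(\pi_e)]$ almost surely. Applying the affine map $t\mapsto \rho_y+(1-2\rho_y)t$ yields the stated almost sure limit.

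\textbf{Limit differs from the target.} Write $r = p_1/p_0$. Then $f_x(\pi) = r\pi/(1+(r-1)\pi)$, and differentiating twice gives
\[
f_x''(\pi) = \frac{-2r(r-1)}{(1+(r-1)\pi)^3}.
\]
The denominator is positive on $[0,1]$. We have $r=1$ if and only if $2x-1=0$, that is, $x=1/2$. So for $x\neq 1/2$, $f_x''$ has constant nonzero sign on $[0,1]$: $f_x$ is strictly concave when $r>1$ and strictly convex when $r<1$.

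Since $\operatorname{Var}(\pi_e)>0$, $\pi_e$ is not almost surely constant. Strict Jensen's inequality then gives $\E[f_x(\pi_e)] \neq f_x(\E\pi_e) = f_x(\mu)$. Because $1-2\rho_y>0$, the affine map is injective, so the limit differs from $p(y=1\mid x)$.

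\textbf{Probability statement.} Let $\Delta$ be the absolute gap between the limit and $p(y=1\mid x)$, so $\Delta>0$, and fix $0<\epsilon<\Delta$. By almost sure convergence, with probability one we have eventually $|\tilde p_m - p| \ge \Delta - |\tilde p_m - \text{limit}| > \epsilon$. Almost sure convergence implies convergence in probability, so $\Pr(|\tilde p_m - \text{limit}| < \Delta-\epsilon) \to 1$. By the triangle inequality this gives $\Pr(|\tilde p_m-p|>\epsilon)\to1$.

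\textbf{Main obstacle.} The only delicate point is the use of strict Jensen's inequality. It needs strict convexity or concavity on the whole support of $\pi_e$, which is supplied by the constant sign of $f_x''$ on $[0,1]$. It also needs $\pi_e$ to be nondegenerate, which is exactly the variance assumption.
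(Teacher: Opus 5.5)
Your proof is correct and follows the same route as the paper: the strong law applied to $\frac1m\sum_j f_x(\pi_{e_j})$, strict Jensen from the constant nonzero sign of $f_x''$ on $[0,1]$ when $x\neq 1/2$, and injectivity of the affine map since $1-2\rho_y>0$. Your $r=p(x\mid z=1)/p(x\mid z=0)$ reparametrization is equivalent to the paper's direct computation of $f_x''$. You also derive the final probability statement explicitly (almost sure convergence, then convergence in probability, then the triangle inequality), which the paper leaves implicit.
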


The proof is provided in Appendix \ref{proof:example_incorrect}.

\section{Proofs} \label{sup:proofs}

\subsection{Proof of Theorem \ref{prop:population}} \label{proof:population}

\begin{proof}
First, note that
\begin{equation} \label{eq:like_xy}
\E_{q(z\vert x,y)}[\log p(y\vert z)]
=
\E_{p(e\vert x,y)}\E_{q(z\vert x,y,e)}\left[\log p(y\vert z)\right].
\end{equation}
Second, by definition
\begin{equation}
\KL(q(z\vert x,y,e) \, \Vert \, q(z\vert x))
=
\E_{q(z\vert x,y,e)}
\left[
\log \frac{q(z\vert x,y,e)}{q(z\vert x)}
\right].
\end{equation}
Taking expectation with respect to $p(e\vert x,y)$ and inserting $q(z\vert x,y)$, we get
\begin{align} \label{eq:kl_xy} 
\E_{p(e\vert x,y)}[\KL(q(z\vert x,y,e) \, \Vert \, q(z\vert x))]
= &
\E_{p(e\vert x,y)}
\E_{q(z\vert x,y,e)}
\left[
\log \frac{q(z\vert x,y,e)}{q(z\vert x,y)}
\right]\\
& +
\E_{q(z\vert x,y)} \notag
\left[
\log \frac{q(z\vert x,y)}{q(z\vert x)}
\right] \\ 
= &
\E_{p(e\vert x,y)}[\KL(q(z\vert x,y,e) \, \Vert \, q(z\vert x,y))]
+
\KL(q(z\vert x,y) \, \Vert \, q(z\vert x)). \notag
\end{align}
Rearranging the last equation, we have
\begin{align} 
\KL(q(z\vert x,y) \, \Vert \, q(z\vert x)) \label{eq:exp_kl_xy}
= & 
\E_{p(e\vert x,y)}[\KL(q(z\vert x,y,e) \, \Vert \, q(z\vert x))] \\
& -
\E_{p(e\vert x,y)}[\KL(q(z\vert x,y,e) \, \Vert \, q(z\vert x,y))]. \notag
\end{align}

Substituting Equations \eqref{eq:like_xy} and \eqref{eq:exp_kl_xy} into $\L(x,y)$ we get
\begin{align}
\L(x,y)
= &
\E_{p(e\vert x,y)}
\Big[
\E_{q(z\vert x,y,e)}[\log p(y\vert z)]\\
&-
\KL(q(z\vert x,y,e) \, \Vert \, q(z\vert x))
\Big]
+
\E_{p(e\vert x,y)}[\KL(q(z\vert x,y,e) \, \Vert \, q(z\vert x,y))]. \notag
\end{align}

Averaging over $p(x,y)$ and using $p(x,y)p(e\vert x,y)=p(e)p(x,y\vert e)$ yields
\begin{align} \notag
\L
= &
\E_{p(x,y)}
\Big[
\E_{p(e\vert x,y)}
\Big[
\E_{q(z\vert x,y,e)}[\log p(y\vert z)]
-
\KL(q(z\vert x,y,e) \, \Vert \, q(z\vert x))
\Big]
\Big] \\
& +
\E_{p(x,y)}
\Big[
\E_{p(e\vert x,y)}[\KL(q(z\vert x,y,e) \, \Vert \, q(z\vert x,y))]
\Big] \\
= &
\E_{p(e)}
\Big[
\E_{p(x,y\vert e)}
\Big[
\E_{q(z\vert x,y,e)}[\log p(y\vert z)] \\
& -
\KL(q(z\vert x,y,e) \, \Vert \, q(z\vert x))
\Big]
\Big]
+
\E_{p(x,y)}
\Big[
\E_{p(e\vert x,y)}[\KL(q(z\vert x,y,e) \, \Vert \, q(z\vert x,y))]
\Big]. \label{eq:L_xy}
\end{align}

Now, for each fixed environment $e$, by adding and subtracting $q(z\vert x,e)$, we get
\begin{align} \label{eq:kl_0}
\E_{p(x,y\vert e)}[\KL(q(z\vert x,y,e) \, \Vert \, q(z\vert x))]
= &
\E_{p(x,y\vert e)}
\E_{q(z\vert x,y,e)}
\left[
\log \frac{q(z\vert x,y,e)}{q(z\vert x)}
\right] \\
= &
\E_{p(x,y\vert e)}
\E_{q(z\vert x,y,e)}
\left[
\log \frac{q(z\vert x,y,e)}{q(z\vert x,e)}
\right]\\
& +
\E_{p(x,y\vert e)}
\E_{q(z\vert x,y,e)}
\left[
\log \frac{q(z\vert x,e)}{q(z\vert x)}
\right]. \notag
\end{align}
Note that the first term is
\begin{equation} \label{eq:first}
\E_{p(x,y\vert e)}[\KL(q(z\vert x,y,e) \, \Vert \, q(z\vert x,e))].
\end{equation}
For the second term, since
\begin{equation}
q(z\vert x,e)=\E_{p(y\vert x,e)}[q(z\vert x,y,e)],
\end{equation}
we have
\begin{align}
&\E_{p(x,y\vert e)}
\E_{q(z\vert x,y,e)}
\left[
\log \frac{q(z\vert x,e)}{q(z\vert x)}
\right] \\
&=
\E_{p(x\vert e)}
\E_{p(y\vert x,e)}
\E_{q(z\vert x,y,e)}
\left[
\log \frac{q(z\vert x,e)}{q(z\vert x)}
\right] \\
&=
\E_{p(x\vert e)}
\E_{q(z\vert x,e)}
\left[
\log \frac{q(z\vert x,e)}{q(z\vert x)}
\right] \\
&=
\E_{p(x\vert e)}[\KL(q(z\vert x,e) \, \Vert \, q(z\vert x))]. \label{eq:second}
\end{align}
Substituting Equations \eqref{eq:first} and \eqref{eq:second}  back into Equation \eqref{eq:kl_0} yields
\begin{align}
\E_{p(x,y\vert e)}[\KL(q(z\vert x,y,e) \, \Vert \, q(z\vert x))]
= & 
\E_{p(x,y\vert e)}[\KL(q(z\vert x,y,e) \, \Vert \, q(z\vert x,e))]\\
& +
\E_{p(x\vert e)}[\KL(q(z\vert x,e) \, \Vert \, q(z\vert x))], \notag
\end{align}
and substituting into Equations \eqref{eq:L_xy},
\begin{equation}
\begin{aligned}
\L
&=
\E_{p(e)}
\Bigg[
\E_{p(x,y\vert e)}
\Big[
\E_{q(z\vert x,y,e)}[\log p(y\vert z)]
-
\KL(q(z\vert x,y,e) \, \Vert \, q(z\vert x,e))
\Big]
-
\E_{p(x\vert e)}[\KL(q(z\vert x,e) \, \Vert \, q(z\vert x))]
\Bigg] \\
&\quad
+
\E_{p(x,y)}
\Big[
\E_{p(e\vert x,y)}[\KL(q(z\vert x,y,e) \, \Vert \, q(z\vert x,y))]
\Big].
\end{aligned}
\end{equation}

Therefore, we have
\begin{align}
\L
=
&\E_{p(e)}\Big[\L_e\Big]
+
\E_{p(x,y)}
\Big[
\E_{p(e\vert x,y)}[\KL(q(z\vert x,y,e) \, \Vert \, q(z\vert x,y))]
\Big] \\
&-
\E_{p(e)}
\Big[
\E_{p(x\vert e)}[\KL(q(z\vert x,e) \, \Vert \, q(z\vert x))]
\Big]. \notag
\end{align}
Using $p(x)p(e\vert x)=p(e)p(x\vert e)$ in the last term gives
\begin{align}
\L
=
&\E_{p(e)}\Big[\L_e\Big]
+
\E_{p(x,y)}
\Big[
\E_{p(e\vert x,y)}[\KL(q(z\vert x,y,e) \, \Vert \, q(z\vert x,y))]
\Big] \\
&-
\E_{p(x)}
\Big[
\E_{p(e\vert x)}[\KL(q(z\vert x,e) \, \Vert \, q(z\vert x))]
\Big]. \notag
\end{align}
\end{proof}

\subsection{Proof of Proposition \ref{prop:example_correct}} \label{proof:example_correct}

\begin{proof} 
For fixed $x$, denote
\begin{equation}
    f_x(a)
    \coloneqq
    \frac{
        a p(x\mid z=1)
    }{
        a p(x\mid z=1)+(1-a)p(x\mid z=0)
    }.
\end{equation}
Then,
\begin{equation}
    p(z=1\mid x)=f_x(\mu),
    \qquad
    \hat p_m(z=1\mid x)=f_x(\hat\mu_m).
\end{equation}
We have that
\begin{equation}
    f_x'(a)
    =
    \frac{
        p(x\mid z=1)p(x\mid z=0)
    }{
        \left(
        a p(x\mid z=1)+(1-a)p(x\mid z=0)
        \right)^2
    }.
\end{equation}
Since
\begin{equation}
    1-\alpha
    \le
    p(x\mid z=u)
    \le
    1+\alpha,
    \qquad u\in\{0,1\},
\end{equation}
we have
\begin{equation}
    \sup_{x\in[0,1]}\sup_{a\in[0,1]}
    \vert f_x'(a)\vert
    \le
    \frac{1+\alpha}{1-\alpha}
    =
    \kappa.
\end{equation}
Therefore,
\begin{equation}
    \sup_{x\in[0,1]}
    \vert \hat p_m(z=1\mid x)-p(z=1\mid x)\vert
    \le
    \kappa\vert \hat\mu_m-\mu\vert.
\end{equation}
Since
\begin{equation}
    \hat p_m(y=1\mid x)-p(y=1\mid x)
    =
    (1-2\rho_y)
    \left(
    \hat p_m(z=1\mid x)-p(z=1\mid x)
    \right),
\end{equation}
it follows that
\begin{equation}
    \sup_{x\in[0,1]}
    \vert \hat p_m(y=1\mid x)-p(y=1\mid x)\vert
    \le
    (1-2\rho_y)\kappa\vert \hat\mu_m-\mu\vert.
\end{equation}
Finally, since $\pi_{e_1},\dots,\pi_{e_m}\in[0,1]$ are iid, Hoeffding's inequality gives
\begin{equation}
    \Pr
    \left(
    \vert \hat\mu_m-\mu\vert>t
    \right)
    \le
    2\exp(-2mt^2).
\end{equation}
Substituting
\begin{equation}
    t=
    \frac{
        \epsilon
    }{
        (1-2\rho_y)\kappa
    }
\end{equation}
completes the proof.
\end{proof}

\subsection{Proof of Proposition \ref{prop:example_incorrect}} \label{proof:example_incorrect}

\begin{proof}
As in the previous proposition, for fixed $x$, denote
\begin{equation}
    f_x(a)
    \coloneqq
    \frac{
        a p(x\mid z=1)
    }{
        a p(x\mid z=1)+(1-a)p(x\mid z=0)
    }.
\end{equation}
Then,
\begin{equation}
    p(z=1\mid x)=f_x(\mu),
    \qquad
    \tilde p_m(z=1\mid x)=\frac{1}{m}\sum_{j=1}^m f_x(\pi_{e_j}).
\end{equation}
By the law of large numbers,
\begin{equation}
    \tilde p_m(z=1\mid x)
    \longrightarrow
    \E_{p(e)}[f_x(\pi_e)]
\end{equation}
almost surely. Moreover,
\begin{equation}
    f_x''(a)
    =
    -
    \frac{
        2p(x\mid z=1)p(x\mid z=0)
        \left(
        p(x\mid z=1)-p(x\mid z=0)
        \right)
    }{
        \left(
        a p(x\mid z=1)+(1-a)p(x\mid z=0)
        \right)^3
    }.
\end{equation}
Since
\begin{equation}
    p(x\mid z=1)-p(x\mid z=0)
    =
    2\alpha(2x-1),
\end{equation}
we have $f_x''(a)\neq 0$ for all $a\in[0,1]$ whenever $x\neq 1/2$. Therefore, $f_x$ is strictly convex or strictly concave. Since $\operatorname{Var}_{p(e)}(\pi_e)>0$, Jensen's inequality is strict:
\begin{equation}
    \E_{p(e)}[f_x(\pi_e)]
    \neq
    f_x(\E_{p(e)}[\pi_e])
    =
    f_x(\mu).
\end{equation}
Therefore, as $m \to \infty $, 
\begin{equation}
    \tilde p_m(z=1\mid x)
    \longrightarrow
    \E_{p(e)}[f_x(\pi_e)]
    \neq
    p(z=1\mid x).
\end{equation}
Since
\begin{equation}
    \tilde p_m(y=1\mid x)
    =
    \rho_y+(1-2\rho_y)\tilde p_m(z=1\mid x),
\end{equation}
and
\begin{equation}
    p(y=1\mid x)
    =
    \rho_y+(1-2\rho_y)p(z=1\mid x),
\end{equation}
with $1-2\rho_y>0$, it follows that
\begin{equation}
    \tilde p_m(y=1\mid x)
    \longrightarrow
    \rho_y+(1-2\rho_y)\E_{p(e)}[f_x(\pi_e)]
    \neq
    p(y=1\mid x).
\end{equation}
\end{proof}

\section{Additional Empirical Results} \label{sup:results}

\subsection{Sensitivity and Ablation Studies} \label{sup:ablation}

In this section we evaluate the robustness of EBER through sensitivity and ablation studies in the parametric simulation. Starting from the baseline setting in Section~\ref{sec:param_sim}, we vary one data-generating parameter at a time: the strength of the observed branch cue $\rho$, the number of training environments $m$, the number of observations per environment $n$, and the observed covariate dimension $d$. We compare EBER with ERM, IRM, V-REx, and Fishr, and include the optimal-Bayes oracle reference that uses the optimal predictor based on the observed covariates. 

We also evaluate EBER variants that replace selected learned components with oracle or uniform alternatives, in order to isolate the role of the learned environment weights and the learned environment-specific posteriors. These additional variants are detailed in Table \ref{tab:eber_variants}.

\begin{table}[H]
    \centering
    \resizebox{1.00\textwidth}{!}{
    \small
    \begin{tabular}{c p{0.47\textwidth} c c c c}
    \toprule
    Variant
    & Name
    & $p(e\vert x)$ train
    & $p(e\vert x)$ test
    & Posterior train
    & Posterior test \\
    \midrule
    1
    & EBER (learned-weights-learned-posteriors)
    & learned
    & learned
    & learned
    & learned \\
    2
    & EBER (learned-train-weights-uniform-test-weights)
    & learned
    & uniform
    & learned
    & learned \\
    3
    & EBER (learned-train-weights-oracle-test-weights)
    & learned
    & oracle
    & learned
    & learned \\
    4
    & EBER (oracle-train-weights-oracle-test-weights)
    & oracle
    & oracle
    & learned
    & learned \\
    5
    & EBER (learned-weights-oracle-posteriors)
    & learned
    & learned
    & oracle
    & oracle \\
    6
    & EBER (uniform-train-weights-uniform-test-weights)
    & uniform
    & uniform
    & learned
    & learned \\
    7
    & EBER (uniform-weights-oracle-posteriors)
    & uniform
    & uniform
    & oracle
    & oracle \\
    \bottomrule
    \end{tabular}
    }
    \vspace{0.5em}
    \caption{EBER variants used in the ablation and sensitivity studies.}
    \label{tab:eber_variants}
\end{table}

\paragraph{Sensitivity analysis results} Tables \ref{tab:sensitivity_m1}-\ref{tab:sensitivity_6} show that cross the sensitivity studies, EBER remains the best method among the implementable methods in NLL, and usually also in accuracy. 

Increasing the \emph{branch-cue strength} leads to improvement in all methods, especially V-REx. This is true also for EBER. The NLL of EBER improves from the baseline value, and its accuracy reaches the optimal-Bayes accuracy.

Increasing the \emph{number of training environments} brings EBER  closest to the optimal-Bayes: for $m=12$ and $m=36$, EBER has NLL $0.254$ and $0.252$, compared with the optimal-Bayes $0.232$ and $0.234$. At $m=36$ the baselines also improve substantially, so the relative advantage of EBER becomes much smaller.

Increasing the \emph{number of examples per environment} also helps EBER: at $n=1000$, EBER nearly matches the optimal-Bayes in accuracy and NLL, while maintaining a large gap over the competing methods.

Varying the \emph{data dimension} shows that at $d=50$ and $d=100$, EBER is still clearly best among the implementable methods, but the gap to the optimal-Bayes widens. At $d=1000$, EBER still has lower NLL than ERM, IRM, V-REx, and Fishr, but the accuracy of all methods collapses.

Runtime is affected most strongly by the number of training environments: EBER increases from about $36$ seconds at $m=12$ to $173$ seconds at $m=36$, whereas changes in data dimension dimension have much smaller effects on the relative runtime.

\paragraph{Ablation analysis results}
Variant 3, which uses learned training weights but oracle test weights, is almost identical to the optimal-Bayes predictor in accuracy and often improves slightly over EBER in NLL. The slight improvement indicates that even when the approximation of $ p(e\mid x)$ learned by EBER is not perfect, the approximation of the per-environment posteriors is relatively accurate. Variant 4, which uses oracle weights at both train and test time, is usually even closer to optimal-Bayes, indicating that the remaining gap between EBER and Bayes-x is largely due to imperfect estimation of the environment weights.

By contrast, variants 2 and 6 consistently yield inferior performance: replacing the test weights by uniform weights, or using uniform weights during both training and testing, brings performance close to the ERM/IRM/V-REx/Fishr range. The same holds for variant 7, showing that even when oracle posteriors are used, as long as uniform weights are used, the performance is poor. Variant 5 shows similar trends in the low- and moderate-dimensional settings, despite using oracle posteriors, which suggests that the way environment-specific components are aggregated is central.

Together, the ablations support the main modeling claim: EBER’s gains come from learning observation-specific environment weights and using them consistently for aggregation.

\subsubsection{Sensitivity to Branch-Cue Strength}

\begin{table}[H]
    \centering
    \small
    \begin{tabular}{lccccc}
    \toprule
    Method & NLL $\downarrow$ & Accuracy $\uparrow$ & ECE $\downarrow$ & Time (s) & PV (NLL) \\
    \midrule
    Bayes-x & 0.231 $\pm$ 0.004 & 0.935 $\pm$ 0.001 & 0.002 $\pm$ 0.000 & 3.293 $\pm$ 0.414 & -- \\
    \midrule
    \midrule
    EBER & \textbf{0.301 $\pm$ 0.071} & \textbf{0.935 $\pm$ 0.002} & 0.099 $\pm$ 0.082 & 17.563 $\pm$ 0.694 & -- \\
    ERM & 0.620 $\pm$ 0.012 & 0.572 $\pm$ 0.016 & 0.119 $\pm$ 0.028 & 4.304 $\pm$ 0.773 & 0.024 \\
    IRM & 0.606 $\pm$ 0.028 & 0.593 $\pm$ 0.032 & 0.115 $\pm$ 0.026 & 6.068 $\pm$ 0.966 & 0.024 \\
    V-REx & 0.580 $\pm$ 0.055 & 0.644 $\pm$ 0.097 & 0.122 $\pm$ 0.037 & 5.173 $\pm$ 0.676 & 0.024 \\
    Fishr & 0.603 $\pm$ 0.012 & 0.602 $\pm$ 0.031 & \textbf{0.089 $\pm$ 0.033} & 6.030 $\pm$ 0.626 & 0.024 \\
    \midrule
    \midrule
    EBER--variant 2 & 0.656 $\pm$ 0.015 & 0.614 $\pm$ 0.036 & 0.152 $\pm$ 0.092 & 17.526 $\pm$ 0.670 & -- \\
    EBER--variant 3 & 0.292 $\pm$ 0.071 & 0.935 $\pm$ 0.001 & 0.087 $\pm$ 0.087 & 0.000 $\pm$ 0.000 & -- \\
    EBER--variant 4 & 0.242 $\pm$ 0.004 & 0.935 $\pm$ 0.001 & 0.010 $\pm$ 0.004 & 136.658 $\pm$ 8.059 & -- \\
    EBER--variant 5 & 0.753 $\pm$ 0.083 & 0.633 $\pm$ 0.011 & 0.325 $\pm$ 0.067 & 207.051 $\pm$ 1.671 & -- \\
    EBER--variant 6 & 0.664 $\pm$ 0.009 & 0.571 $\pm$ 0.022 & 0.064 $\pm$ 0.039 & 19.698 $\pm$ 1.036 & -- \\
    EBER--variant 7 & 0.674 $\pm$ 0.009 & 0.589 $\pm$ 0.048 & 0.092 $\pm$ 0.031 & 178.919 $\pm$ 1.296 & -- \\
    \bottomrule
    \end{tabular}
    \vspace{0.5em}
    \caption{Sensitivity and ablation results for branch-cue strength:  $\rho = 0.15$.}
    \label{tab:sensitivity_m1}
\end{table}

\begin{table}[H]
    \centering
    \small
    \begin{tabular}{lccccc}
    \toprule
    Method & NLL $\downarrow$ & Accuracy $\uparrow$ & ECE $\downarrow$ & Time (s) & PV (NLL) \\
    \midrule
    Bayes-x & 0.231 $\pm$ 0.004 & 0.935 $\pm$ 0.001 & 0.002 $\pm$ 0.000 & 3.335 $\pm$ 0.267 & -- \\
    \midrule
    \midrule
    EBER & \textbf{0.297 $\pm$ 0.067} & \textbf{0.935 $\pm$ 0.002} & 0.095 $\pm$ 0.080 & 17.390 $\pm$ 0.678 & -- \\
    ERM & 0.576 $\pm$ 0.026 & 0.626 $\pm$ 0.042 & 0.108 $\pm$ 0.056 & 4.602 $\pm$ 0.333 & 0.026 \\
    IRM & 0.551 $\pm$ 0.036 & 0.652 $\pm$ 0.048 & 0.123 $\pm$ 0.018 & 5.946 $\pm$ 0.956 & 0.026 \\
    V-REx & 0.544 $\pm$ 0.058 & 0.692 $\pm$ 0.081 & \textbf{0.084 $\pm$ 0.026} & 5.117 $\pm$ 0.405 & 0.026 \\
    Fishr & 0.561 $\pm$ 0.032 & 0.658 $\pm$ 0.026 & 0.093 $\pm$ 0.032 & 6.458 $\pm$ 0.686 & 0.026 \\
    \midrule
    \midrule
    EBER--variant 2 & 0.657 $\pm$ 0.021 & 0.578 $\pm$ 0.066 & 0.142 $\pm$ 0.086 & 17.677 $\pm$ 0.736 & -- \\
    EBER--variant 3 & 0.292 $\pm$ 0.066 & 0.935 $\pm$ 0.001 & 0.088 $\pm$ 0.081 & 0.000 $\pm$ 0.000 & -- \\
    EBER--variant 4 & 0.242 $\pm$ 0.003 & 0.935 $\pm$ 0.001 & 0.011 $\pm$ 0.004 & 133.081 $\pm$ 9.282 & -- \\
    EBER--variant 5 & 0.753 $\pm$ 0.082 & 0.628 $\pm$ 0.021 & 0.326 $\pm$ 0.065 & 196.114 $\pm$ 4.056 & -- \\
    EBER--variant 6 & 0.650 $\pm$ 0.014 & 0.573 $\pm$ 0.011 & 0.106 $\pm$ 0.058 & 19.060 $\pm$ 0.292 & -- \\
    EBER--variant 7 & 0.666 $\pm$ 0.010 & 0.553 $\pm$ 0.066 & 0.146 $\pm$ 0.034 & 170.055 $\pm$ 0.875 & -- \\
    \bottomrule
    \end{tabular}
    \vspace{0.5em}
    \caption{Sensitivity and ablation results for branch-cue strength:  $\rho = 0.20$.}
    \label{tab:sensitivity_0}
\end{table}

\subsubsection{Sensitivity to Number of Training Environments}

\begin{table}[H]
    \centering
    \small
    \begin{tabular}{lccccc}
    \toprule
    Method & NLL $\downarrow$ & Accuracy $\uparrow$ & ECE $\downarrow$ & Time (s) & PV (NLL) \\
    \midrule
    Bayes-x & 0.232 $\pm$ 0.005 & 0.935 $\pm$ 0.002 & 0.003 $\pm$ 0.002 & 3.306 $\pm$ 0.703 & -- \\
    \midrule
    \midrule
    EBER & \textbf{0.254 $\pm$ 0.003} & \textbf{0.932 $\pm$ 0.002} & \textbf{0.026 $\pm$ 0.005} & 36.465 $\pm$ 1.990 & -- \\
    ERM & 0.572 $\pm$ 0.015 & 0.629 $\pm$ 0.003 & 0.099 $\pm$ 0.031 & 6.148 $\pm$ 1.002 & 0.003 \\
    IRM & 0.544 $\pm$ 0.026 & 0.663 $\pm$ 0.034 & 0.113 $\pm$ 0.032 & 11.869 $\pm$ 1.147 & 0.006 \\
    V-REx & 0.549 $\pm$ 0.051 & 0.675 $\pm$ 0.047 & 0.119 $\pm$ 0.021 & 7.431 $\pm$ 0.218 & 0.011 \\
    Fishr & 0.527 $\pm$ 0.047 & 0.706 $\pm$ 0.070 & 0.105 $\pm$ 0.015 & 12.114 $\pm$ 0.616 & 0.011 \\
    \midrule
    \midrule
    EBER--variant 2 & 0.639 $\pm$ 0.028 & 0.594 $\pm$ 0.074 & 0.104 $\pm$ 0.001 & 37.461 $\pm$ 2.338 & -- \\
    EBER--variant 3 & 0.247 $\pm$ 0.004 & 0.934 $\pm$ 0.002 & 0.016 $\pm$ 0.005 & 0.000 $\pm$ 0.000 & -- \\
    EBER--variant 4 & 0.242 $\pm$ 0.006 & 0.935 $\pm$ 0.002 & 0.009 $\pm$ 0.002 & 302.468 $\pm$ 5.353 & -- \\
    EBER--variant 5 & 0.674 $\pm$ 0.023 & 0.593 $\pm$ 0.081 & 0.186 $\pm$ 0.071 & 638.964 $\pm$ 19.802 & -- \\
    EBER--variant 6 & 0.674 $\pm$ 0.006 & 0.573 $\pm$ 0.015 & 0.039 $\pm$ 0.016 & 37.607 $\pm$ 1.260 & -- \\
    EBER--variant 7 & 0.694 $\pm$ 0.011 & 0.573 $\pm$ 0.064 & 0.110 $\pm$ 0.014 & 519.677 $\pm$ 4.796 & -- \\
    \bottomrule
    \end{tabular}
    \vspace{0.5em}
    \caption{Sensitivity and ablation results for number of training environments: $m=12$.}
    \label{tab:sensitivity_1}
\end{table}

\begin{table}[H]
    \centering
    \small
    \begin{tabular}{lccccc}
    \toprule
    Method & NLL $\downarrow$ & Accuracy $\uparrow$ & ECE $\downarrow$ & Time (s) & PV (NLL) \\
    \midrule
    Bayes-x & 0.234 $\pm$ 0.005 & 0.934 $\pm$ 0.002 & 0.003 $\pm$ 0.000 & 2.867 $\pm$ 0.036 & -- \\
    \midrule
    \midrule
    EBER & \textbf{0.252 $\pm$ 0.005} & \textbf{0.932 $\pm$ 0.002} & \textbf{0.021 $\pm$ 0.003} & 173.110 $\pm$ 2.214 & -- \\
    ERM & 0.265 $\pm$ 0.009 & 0.931 $\pm$ 0.004 & 0.035 $\pm$ 0.006 & 10.531 $\pm$ 0.139 & 0.056 \\
    IRM & 0.312 $\pm$ 0.007 & 0.926 $\pm$ 0.003 & 0.105 $\pm$ 0.008 & 53.156 $\pm$ 0.952 & 0.024 \\
    V-REx & 0.296 $\pm$ 0.008 & 0.931 $\pm$ 0.002 & 0.094 $\pm$ 0.006 & 21.215 $\pm$ 0.475 & 0.024 \\
    Fishr & 0.273 $\pm$ 0.002 & 0.930 $\pm$ 0.002 & 0.047 $\pm$ 0.007 & 39.614 $\pm$ 0.927 & 0.035 \\
    \midrule
    \midrule
    EBER--variant 2 & 0.641 $\pm$ 0.016 & 0.643 $\pm$ 0.013 & 0.097 $\pm$ 0.006 & 172.446 $\pm$ 3.578 & -- \\
    EBER--variant 3 & 0.249 $\pm$ 0.006 & 0.933 $\pm$ 0.002 & 0.017 $\pm$ 0.005 & 0.000 $\pm$ 0.000 & -- \\
    EBER--variant 4 & 0.241 $\pm$ 0.005 & 0.934 $\pm$ 0.002 & 0.011 $\pm$ 0.002 & 1486.899 $\pm$ 45.133 & -- \\
    EBER--variant 5 & 0.661 $\pm$ 0.001 & 0.639 $\pm$ 0.003 & 0.187 $\pm$ 0.057 & 4185.989 $\pm$ 205.771 & -- \\
    EBER--variant 6 & 0.673 $\pm$ 0.006 & 0.573 $\pm$ 0.041 & 0.053 $\pm$ 0.027 & 160.284 $\pm$ 0.768 & -- \\
    EBER--variant 7 & 0.688 $\pm$ 0.000 & 0.615 $\pm$ 0.011 & 0.109 $\pm$ 0.011 & 3227.361 $\pm$ 30.513 & -- \\
    \bottomrule
    \end{tabular}
    \vspace{0.5em}
    \caption{Sensitivity and ablation results for number of training environments: $m=36$.}
    \label{tab:sensitivity_2}
\end{table}

\subsubsection{Sensitivity to Number of Training Examples per Environment}

\begin{table}[H]
    \centering
    \small
    \begin{tabular}{lccccc}
    \toprule
    Method & NLL $\downarrow$ & Accuracy $\uparrow$ & ECE $\downarrow$ & Time (s) & PV (NLL) \\
    \midrule
    Bayes-x & 0.233 $\pm$ 0.004 & 0.934 $\pm$ 0.002 & 0.002 $\pm$ 0.001 & 6.090 $\pm$ 0.769 & -- \\
    \midrule
    \midrule
    EBER & \textbf{0.250 $\pm$ 0.006} & \textbf{0.932 $\pm$ 0.002} & \textbf{0.019 $\pm$ 0.005} & 35.630 $\pm$ 0.307 & -- \\
    ERM & 0.546 $\pm$ 0.003 & 0.665 $\pm$ 0.038 & 0.100 $\pm$ 0.038 & 8.766 $\pm$ 0.835 & $< 0.001$ \\
    IRM & 0.544 $\pm$ 0.038 & 0.658 $\pm$ 0.027 & 0.110 $\pm$ 0.028 & 12.393 $\pm$ 0.189 & 0.007 \\
    V-REx & 0.509 $\pm$ 0.081 & 0.740 $\pm$ 0.098 & 0.095 $\pm$ 0.007 & 9.420 $\pm$ 0.916 & 0.032 \\
    Fishr & 0.510 $\pm$ 0.031 & 0.713 $\pm$ 0.054 & 0.126 $\pm$ 0.036 & 12.524 $\pm$ 0.180 & 0.007 \\
    \midrule
    \midrule
    EBER--variant 2 & 0.654 $\pm$ 0.005 & 0.546 $\pm$ 0.055 & 0.155 $\pm$ 0.003 & 34.958 $\pm$ 0.631 & -- \\
    EBER--variant 3 & 0.246 $\pm$ 0.005 & 0.933 $\pm$ 0.002 & 0.015 $\pm$ 0.004 & 0.000 $\pm$ 0.000 & -- \\
    EBER--variant 4 & 0.243 $\pm$ 0.005 & 0.934 $\pm$ 0.002 & 0.006 $\pm$ 0.004 & 285.632 $\pm$ 2.888 & -- \\
    EBER--variant 5 & 0.662 $\pm$ 0.000 & 0.639 $\pm$ 0.001 & 0.268 $\pm$ 0.030 & 465.074 $\pm$ 9.839 & -- \\
    EBER--variant 6 & 0.681 $\pm$ 0.002 & 0.571 $\pm$ 0.005 & 0.045 $\pm$ 0.011 & 39.347 $\pm$ 0.542 & -- \\
    EBER--variant 7 & 0.688 $\pm$ 0.000 & 0.589 $\pm$ 0.057 & 0.081 $\pm$ 0.056 & 402.842 $\pm$ 8.071 & -- \\
    \bottomrule
    \end{tabular}
    \vspace{0.5em}
    \caption{Sensitivity and ablation results for training examples per environment: $n=1000$.}
    \label{tab:sensitivity_3}
\end{table}

\subsubsection{Sensitivity to Data Dimension}

\begin{table}[H]
    \centering
    \small
    \begin{tabular}{lccccc}
    \toprule
    Method & NLL $\downarrow$ & Accuracy $\uparrow$ & ECE $\downarrow$ & Time (s) & PV (NLL) \\
    \midrule
    Bayes-x & 0.234 $\pm$ 0.002 & 0.934 $\pm$ 0.001 & 0.002 $\pm$ 0.001 & 3.438 $\pm$ 0.587 & -- \\
    \midrule
    \midrule
    EBER & \textbf{0.366 $\pm$ 0.101} & \textbf{0.907 $\pm$ 0.007} & 0.126 $\pm$ 0.099 & 17.741 $\pm$ 0.333 & -- \\
    ERM & 0.685 $\pm$ 0.003 & 0.570 $\pm$ 0.002 & 0.071 $\pm$ 0.001 & 3.988 $\pm$ 0.340 & 0.035 \\
    IRM & 0.688 $\pm$ 0.001 & 0.564 $\pm$ 0.003 & 0.057 $\pm$ 0.005 & 5.992 $\pm$ 0.755 & 0.035 \\
    V-REx & 0.688 $\pm$ 0.004 & 0.561 $\pm$ 0.005 & \textbf{0.052 $\pm$ 0.006} & 5.470 $\pm$ 0.996 & 0.035 \\
    Fishr & 0.689 $\pm$ 0.000 & 0.570 $\pm$ 0.001 & 0.077 $\pm$ 0.002 & 5.720 $\pm$ 0.314 & 0.035 \\
    \midrule
    \midrule
    EBER--variant 2 & 0.666 $\pm$ 0.012 & 0.560 $\pm$ 0.020 & 0.105 $\pm$ 0.085 & 17.809 $\pm$ 0.418 & -- \\
    EBER--variant 3 & 0.322 $\pm$ 0.113 & 0.932 $\pm$ 0.001 & 0.106 $\pm$ 0.130 & 0.000 $\pm$ 0.000 & -- \\
    EBER--variant 4 & 0.258 $\pm$ 0.008 & 0.934 $\pm$ 0.001 & 0.028 $\pm$ 0.008 & 141.465 $\pm$ 0.622 & -- \\
    EBER--variant 5 & 0.758 $\pm$ 0.089 & 0.633 $\pm$ 0.009 & 0.313 $\pm$ 0.071 & 224.025 $\pm$ 3.412 & -- \\
    EBER--variant 6 & 0.673 $\pm$ 0.003 & 0.571 $\pm$ 0.003 & 0.022 $\pm$ 0.009 & 20.482 $\pm$ 0.565 & -- \\
    EBER--variant 7 & 0.687 $\pm$ 0.002 & 0.525 $\pm$ 0.044 & 0.045 $\pm$ 0.014 & 199.823 $\pm$ 1.191 & -- \\
    \bottomrule
    \end{tabular}
    \vspace{0.5em}
    \caption{Sensitivity and ablation results for data dimension: $d=50$.}
    \label{tab:sensitivity_4}
\end{table}

\begin{table}[H]
    \centering
    \small
    \begin{tabular}{lccccc}
    \toprule
    Method & NLL $\downarrow$ & Accuracy $\uparrow$ & ECE $\downarrow$ & Time (s) & PV (NLL) \\
    \midrule
    Bayes-x & 0.232 $\pm$ 0.001 & 0.935 $\pm$ 0.000 & 0.003 $\pm$ 0.001 & 3.345 $\pm$ 0.401 & -- \\
    \midrule
    \midrule
    EBER & \textbf{0.386 $\pm$ 0.078} & \textbf{0.882 $\pm$ 0.015} & 0.108 $\pm$ 0.069 & 19.376 $\pm$ 1.562 & -- \\
    ERM & 0.734 $\pm$ 0.003 & 0.562 $\pm$ 0.006 & 0.122 $\pm$ 0.005 & 3.921 $\pm$ 0.036 & 0.021 \\
    IRM & 0.735 $\pm$ 0.014 & 0.554 $\pm$ 0.005 & 0.118 $\pm$ 0.016 & 6.359 $\pm$ 0.714 & 0.021 \\
    V-REx & 0.726 $\pm$ 0.008 & 0.553 $\pm$ 0.005 & \textbf{0.108 $\pm$ 0.007} & 5.380 $\pm$ 0.700 & 0.021 \\
    Fishr & 0.744 $\pm$ 0.002 & 0.558 $\pm$ 0.005 & 0.132 $\pm$ 0.001 & 6.033 $\pm$ 0.757 & 0.021 \\
    \midrule
    \midrule
    EBER--variant 2 & 0.671 $\pm$ 0.008 & 0.579 $\pm$ 0.008 & 0.057 $\pm$ 0.015 & 18.683 $\pm$ 1.345 & -- \\
    EBER--variant 3 & 0.310 $\pm$ 0.091 & 0.933 $\pm$ 0.002 & 0.103 $\pm$ 0.103 & 0.000 $\pm$ 0.000 & -- \\
    EBER--variant 4 & 0.277 $\pm$ 0.012 & 0.929 $\pm$ 0.005 & 0.045 $\pm$ 0.007 & 140.905 $\pm$ 1.172 & -- \\
    EBER--variant 5 & 0.760 $\pm$ 0.089 & 0.636 $\pm$ 0.001 & 0.305 $\pm$ 0.080 & 226.697 $\pm$ 2.373 & -- \\
    EBER--variant 6 & 0.686 $\pm$ 0.002 & 0.561 $\pm$ 0.003 & 0.039 $\pm$ 0.008 & 20.606 $\pm$ 0.696 & -- \\
    EBER--variant 7 & 0.688 $\pm$ 0.002 & 0.529 $\pm$ 0.052 & 0.050 $\pm$ 0.022 & 197.960 $\pm$ 3.629 & -- \\
    \bottomrule
    \end{tabular}
    \vspace{0.5em}
    \caption{Sensitivity and ablation results for data dimension: $d=100$.}
    \label{tab:sensitivity_5}
\end{table}

\begin{table}[H]
    \small
    \begin{tabular}{lccccc}
    \toprule
    Method & NLL $\downarrow$ & Accuracy $\uparrow$ & ECE $\downarrow$ & Time (s) & PV (NLL) \\
    \midrule
    Bayes-x & 0.232 $\pm$ 0.002 & 0.934 $\pm$ 0.001 & 0.002 $\pm$ 0.001 & 3.055 $\pm$ 0.051 & -- \\
    \midrule
    \midrule
    EBER & \textbf{1.773 $\pm$ 0.331} & 0.530 $\pm$ 0.008 & \textbf{0.349 $\pm$ 0.026} & 26.296 $\pm$ 0.713 & -- \\
    ERM & 2.169 $\pm$ 0.076 & 0.531 $\pm$ 0.007 & 0.378 $\pm$ 0.002 & 5.045 $\pm$ 0.509 & 0.213 \\
    IRM & 2.117 $\pm$ 0.018 & 0.532 $\pm$ 0.005 & 0.377 $\pm$ 0.006 & 7.186 $\pm$ 0.666 & 0.213 \\
    V-REx & 2.175 $\pm$ 0.051 & 0.532 $\pm$ 0.007 & 0.379 $\pm$ 0.008 & 5.240 $\pm$ 0.129 & 0.213 \\
    Fishr & 2.144 $\pm$ 0.098 & \textbf{0.532 $\pm$ 0.007} & 0.378 $\pm$ 0.007 & 7.724 $\pm$ 0.294 & 0.213 \\
    \midrule
    \midrule
    EBER--variant 2 & 1.209 $\pm$ 0.081 & 0.529 $\pm$ 0.003 & 0.279 $\pm$ 0.027 & 25.465 $\pm$ 0.879 & -- \\
    EBER--variant 3 & 1.167 $\pm$ 0.144 & 0.631 $\pm$ 0.019 & 0.240 $\pm$ 0.018 & 0.000 $\pm$ 0.000 & -- \\
    EBER--variant 4 & 1.702 $\pm$ 0.228 & 0.616 $\pm$ 0.019 & 0.298 $\pm$ 0.024 & 151.139 $\pm$ 0.521 & -- \\
    EBER--variant 5 & 0.748 $\pm$ 0.064 & 0.572 $\pm$ 0.005 & 0.130 $\pm$ 0.079 & 223.441 $\pm$ 2.586 & -- \\
    EBER--variant 6 & 1.970 $\pm$ 0.284 & 0.530 $\pm$ 0.006 & 0.353 $\pm$ 0.025 & 27.924 $\pm$ 0.794 & -- \\
    EBER--variant 7 & 0.688 $\pm$ 0.003 & 0.533 $\pm$ 0.059 & 0.056 $\pm$ 0.027 & 195.873 $\pm$ 1.272 & -- \\
    \bottomrule
    \end{tabular}
    \centering
    \vspace{0.5em}
    \caption{Sensitivity and ablation results for data dimension: $d=1000$.}
    \label{tab:sensitivity_6}
\end{table}

\subsection{Additional Results for Main-Text Experiments} \label{sup:details}

Tables \ref{tab:add_parametric}--\ref{tab:add_sepsis} report the main-text results, augmented with ECE and running time. The times are recorded on a single CPU 5 performance cores and 6 efficiency cores, for all experiments except Colored MNIST where a single 
A100 GPU was used.

\begin{table}[H]
    \small
    \centering
    \begin{tabular}{lccccc}
    \toprule
    Method & NLL $\downarrow$ & Accuracy $\uparrow$ & ECE $\downarrow$ & Time (s) & PV (NLL) \\
    \midrule
    Bayes-x & 0.233 $\pm$ 0.004 & 0.934 $\pm$ 0.002 & 0.002 $\pm$ 0.001 & 3.090 $\pm$ 0.379 & -- \\
    \midrule
    EBER & \textbf{0.319 $\pm$ 0.069} & \textbf{0.928 $\pm$ 0.003} & 0.112 $\pm$ 0.078 & 18.920 $\pm$ 1.117 & -- \\
    ERM & 0.652 $\pm$ 0.005 & 0.577 $\pm$ 0.009 & 0.103 $\pm$ 0.013 & 4.373 $\pm$ 0.954 & 0.015 \\
    IRM & 0.654 $\pm$ 0.006 & 0.573 $\pm$ 0.012 & \textbf{0.086 $\pm$ 0.022} & 6.053 $\pm$ 0.789 & 0.015 \\
    V-REx & 0.640 $\pm$ 0.026 & 0.572 $\pm$ 0.016 & 0.109 $\pm$ 0.021 & 5.008 $\pm$ 0.658 & 0.015 \\
    Fishr & 0.652 $\pm$ 0.006 & 0.582 $\pm$ 0.011 & 0.097 $\pm$ 0.010 & 5.812 $\pm$ 0.712 & 0.015 \\
    \bottomrule
    \end{tabular}
    \caption{Additional results for the parametric simulation with baseline values.}
    \label{tab:add_parametric}
\end{table}

\begin{table}[H]
    \centering
    \small
    \begin{tabular}{lccccc}
    \toprule
    Method & NLL $\downarrow$ & Accuracy $\uparrow$ & ECE $\downarrow$ & Time (s) & PV (NLL) \\
    \midrule
    EBER & \textbf{0.519 $\pm$ 0.077} & \textbf{0.787 $\pm$ 0.071} & \textbf{0.076 $\pm$ 0.021} & 149.269 $\pm$ 37.363 & -- \\
    ERM & 0.863 $\pm$ 0.090 & 0.650 $\pm$ 0.021 & 0.194 $\pm$ 0.027 & 45.019 $\pm$ 1.007 & 0.004 \\
    IRM & 0.572 $\pm$ 0.007 & 0.747 $\pm$ 0.017 & 0.101 $\pm$ 0.014 & 0.041 $\pm$ 0.006 & 0.203 \\
    V-REx & 0.605 $\pm$ 0.019 & 0.684 $\pm$ 0.011 & 0.081 $\pm$ 0.011 & 70.164 $\pm$ 0.623 & 0.140 \\
    Fishr & 0.917 $\pm$ 0.061 & 0.639 $\pm$ 0.020 & 0.207 $\pm$ 0.021 & 101.795 $\pm$ 1.436 & 0.004 \\
    \bottomrule
    \end{tabular}
    \vspace{0.5em}
    \caption{Additional results for the colored-MNIST simulation.}
    \label{tab:add_mnist}
\end{table}

\begin{table}[H]
    \centering
    \small
    \begin{tabular}{lccccc}
    \toprule
    Method & NLL $\downarrow$ & Accuracy $\uparrow$ & ECE $\downarrow$ & Time (s) & PV (NLL) \\
    \midrule
    EBER & \textbf{0.221 $\pm$ 0.019} & \textbf{0.921 $\pm$ 0.007} & \textbf{0.048 $\pm$ 0.011} & 32.282 $\pm$ 0.690 & -- \\
    ERM & 0.390 $\pm$ 0.001 & 0.897 $\pm$ 0.001 & 0.104 $\pm$ 0.002 & 4.293 $\pm$ 1.548 & $< 0.001$ \\
    Fishr & 0.384 $\pm$ 0.004 & 0.894 $\pm$ 0.001 & 0.111 $\pm$ 0.002 & 6.666 $\pm$ 0.227 & $< 0.001$ \\
    IRM & 0.383 $\pm$ 0.009 & 0.891 $\pm$ 0.003 & 0.115 $\pm$ 0.006 & 6.781 $\pm$ 0.490 & $< 0.001$ \\
    VREx & 0.382 $\pm$ 0.009 & 0.892 $\pm$ 0.003 & 0.114 $\pm$ 0.005 & 5.154 $\pm$ 0.271 & $< 0.001$ \\
    \bottomrule
    \end{tabular}
    \vspace{0.5em}
    \caption{Additional results for the quasar-star classification experiment.}
    \label{tab:add_sdss}
\end{table}

\begin{table}[H]
    \centering
    \small
    \begin{tabular}{lccccc}
    \toprule
    Method & NLL $\downarrow$ & Accuracy $\uparrow$ & ECE $\downarrow$ & Time (s) & PV (NLL) \\
    \midrule
    EBER & \textbf{0.850 $\pm$ 0.112} & \textbf{0.591 $\pm$ 0.027} & \textbf{0.131 $\pm$ 0.050} & 1.163 $\pm$ 0.076 & -- \\
    ERM & 1.685 $\pm$ 0.136 & 0.587 $\pm$ 0.016 & 0.287 $\pm$ 0.018 & 0.242 $\pm$ 0.274 & 0.002 \\
    Fishr & 1.449 $\pm$ 0.226 & 0.582 $\pm$ 0.021 & 0.255 $\pm$ 0.035 & 0.191 $\pm$ 0.016 & 0.008 \\
    IRM & 1.548 $\pm$ 0.224 & 0.588 $\pm$ 0.011 & 0.265 $\pm$ 0.025 & 0.199 $\pm$ 0.015 & 0.007 \\
    VREx & 0.903 $\pm$ 0.087 & 0.571 $\pm$ 0.014 & 0.157 $\pm$ 0.026 & 0.153 $\pm$ 0.009 & 0.543 \\
    \bottomrule
    \end{tabular}
    \vspace{0.5em}
    \caption{Additional results for the microbiome classification experiment.}
    \label{tab:add_micro}
\end{table}

\begin{table}[H]
    \centering
    \small
    \begin{tabular}{lccccc}
    \toprule
    Method & AUROC $\uparrow$ & AUPRC $\uparrow$ & ECE $\downarrow$ & Time (s) & PV (AUROC) \\
    \midrule
    EBER & \textbf{0.740 $\pm$ 0.008} & \textbf{0.179 $\pm$ 0.008} & 0.028 $\pm$ 0.009 & 43.754 $\pm$ 0.921 & -- \\
    ERM & 0.692 $\pm$ 0.017 & 0.153 $\pm$ 0.013 & 0.039 $\pm$ 0.004 & 5.718 $\pm$ 0.348 & 0.003 \\
    Fishr & 0.697 $\pm$ 0.014 & 0.156 $\pm$ 0.008 & 0.039 $\pm$ 0.005 & 12.690 $\pm$ 0.425 & 0.002 \\
    IRM & 0.724 $\pm$ 0.012 & 0.169 $\pm$ 0.012 & \textbf{0.023 $\pm$ 0.004} & 13.422 $\pm$ 0.671 & 0.112 \\
    VREx & 0.697 $\pm$ 0.012 & 0.154 $\pm$ 0.011 & 0.029 $\pm$ 0.007 & 8.814 $\pm$ 0.165 & 0.001 \\
    \bottomrule
    \end{tabular}
    \vspace{0.5em}
    \caption{Additional results for the sepsis prediction experiment.}
    \label{tab:add_sepsis}
\end{table}

\subsection{Method-Agreement Analysis} \label{sup:error}

To better understand where EBER differs from the competing methods, we decompose test examples by method agreement. In Figures \ref{fig:SDSS_agreement} and \ref{fig:ZENODO_agreement} the left panels report the fraction of test examples on which all methods either succeed or fail. The right panels focus on disagreement cases where EBER is correct, and report how often each competing method is also correct on those examples. In both datasets, no competing method succeeds consistently on the EBER-success disagreement subset, indicating that these cases are not simply easy examples solved by all methods.

\begin{figure}[ht]
    \centering
    \includegraphics[width=0.75\linewidth]{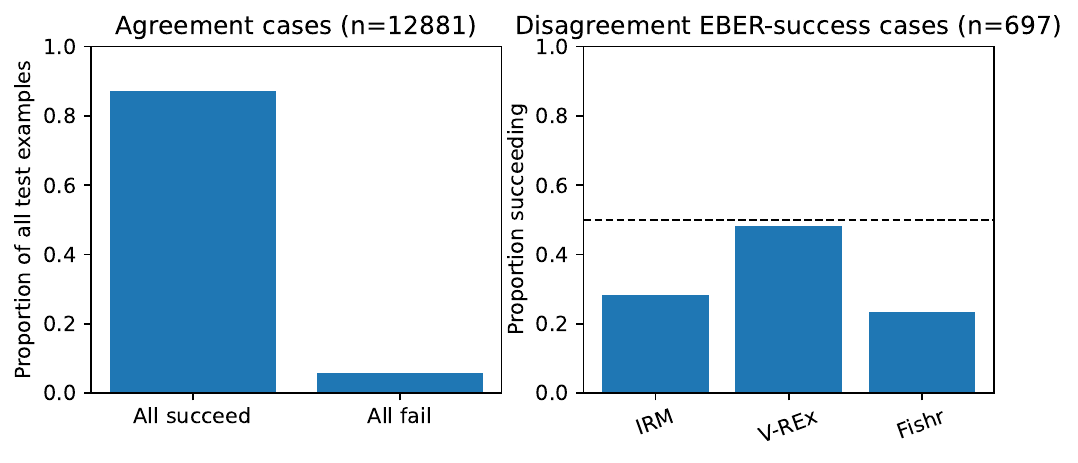}
    \caption{Agreement between methods in the quasar-star classification classification experiment. Left: proportion of test examples on which all methods are correct or all methods are incorrect.
    Right: among disagreement cases where EBER is correct, the proportion also classified correctly by each competing method. 
    The dashed line marks $0.5$.}
    \label{fig:SDSS_agreement}
\end{figure}

\begin{figure}[ht]
    \centering
    \includegraphics[width=0.75\linewidth]{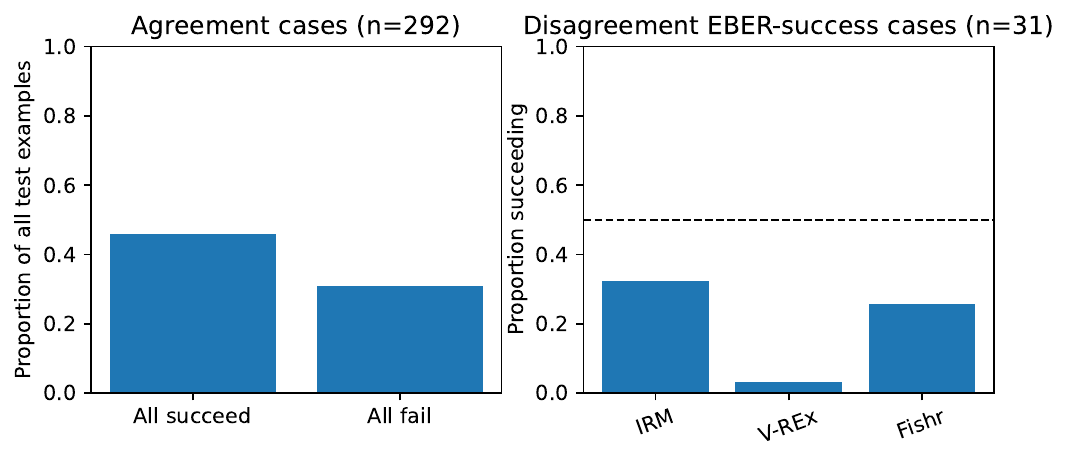}
    \caption{Agreement between methods in the microbiome classification experiment. Left: proportion of test examples on which all methods are correct or all methods are incorrect.
    Right: among disagreement cases where EBER is correct, the proportion also classified correctly by each competing method. 
    The dashed line marks $0.5$}
    \label{fig:ZENODO_agreement}
\end{figure}

\section{Additional Experimental Details} \label{sup:exp}

\subsection{Motivation Example} \label{sup:exp_example}

For Figure \ref{fig:example}, we set $\alpha=0.85$ and $\rho_y=0.10$. Environment-specific prevalences were sampled as $\pi_e\sim\operatorname{Beta}(0.5,0.5)$, so that $\mu=\E_{p(e)}[\pi_e]=0.5$. For the first two panels, we sampled $m=30$ observed environments and evaluated the conditional weights at $x\in\{0.15,0.50,0.85\}$. For the convergence panel, we used $m\in\{20,50,100,200,500,1000,2000,5000\}$ observed environments and repeated the experiment 500 times. The uniform-weight limiting error was approximated by Monte Carlo integration using 200,000 draws of $\pi_e$. All curves were evaluated on an equally spaced grid of 500 values in $[0.001,0.999]$.

\subsection{Parametric Model}

The simulation is designed so that environments affect the observed data only through the latent variable $z$. Each environment $j$ is associated with a branch index $b_j$. We use branch values
\begin{equation}
b_j\in\{-3,-2,-1,0,1,2\}.
\end{equation}
For each environment, we draw a branch center
\begin{equation}
m_j
=
\Delta b_j+\phi_0+\delta_j,
\qquad
\delta_j\sim\mathcal{N}(0,\sigma_\delta^2).
\end{equation}
Given $m_j$, observations in that environment are generated independently as
\begin{equation}
z_{ij}\mid e_j
\sim
\mathcal{N}(m_j,\sigma_z^2).
\end{equation}
The observed covariates are generated from $z_{ij}$ by
\begin{equation}
x_{ij,1}
=
\cos(z_{ij})+\varepsilon_{ij,1},
\end{equation}
\begin{equation}
x_{ij,2}
=
\sin(z_{ij})+\varepsilon_{ij,2},
\end{equation}
\begin{equation}
x_{ij,3}
=
\rho z_{ij}+\varepsilon_{ij,3},
\end{equation}
and
\begin{equation}
x_{ij,k}
=
\varepsilon_{ij,k},
\qquad
k=4,\ldots,d.
\end{equation}
The noise variables satisfy
\begin{equation}
\varepsilon_{ij,1},\varepsilon_{ij,2}
\sim
\mathcal{N}(0,\sigma_{\mathrm{phase}}^2),
\end{equation}
\begin{equation}
\varepsilon_{ij,3}
\sim
\mathcal{N}(0,\sigma_{\mathrm{branch}}^2),
\end{equation}
and
\begin{equation}
\varepsilon_{ij,k}
\sim
\mathcal{N}(0,\sigma_{\mathrm{noise}}^2),
\qquad
k=4,\ldots,d.
\end{equation}
The label is generated by
\begin{equation}
y_{ij}\mid z_{ij}
\sim
\mathrm{Bernoulli}
\left(
\sigma\left(\beta_0+\beta_1\sin(z_{ij}/2)\right)
\right),
\end{equation}
where $\sigma(t)=(1+\exp(-t))^{-1}$. Therefore,
\begin{equation}
p(e,z,x,y)
=
p(e)\,p(z\mid e)\,p(x\mid z)\,p(y\mid z)
\end{equation}
as in the assumed model.

The key feature of this construction is that $(x_1,x_2)$ reveal the phase of $z$ but not its branch. For every integer $k$,
\begin{equation}
\cos(z+2\pi k)=\cos(z),
\qquad
\sin(z+2\pi k)=\sin(z).
\end{equation}
However, adjacent branches have opposite label mechanisms because
\begin{equation}
\sin((z+2\pi)/2)
=
-\sin(z/2).
\end{equation}
Thus, $p(y\mid x,e)$ varies across environments, even though the latent mechanism $p(y\mid z)$ is stable.

The baseline parameters are
\begin{equation}
m_{\mathrm{train}}=6,
\qquad
n=500,
\qquad
d=10,
\end{equation}
\begin{equation}
\Delta=2\pi,
\qquad
\phi_0=\pi/2,
\qquad
\sigma_\delta=0.05,
\end{equation}
and
\begin{equation}
\sigma_z=0.35,
\qquad
\sigma_{\mathrm{phase}}=0.05,
\qquad
\rho=0.1,
\qquad
\sigma_{\mathrm{branch}}=0.1,
\qquad
\sigma_{\mathrm{noise}}=1.
\end{equation}
The label parameter is
\begin{equation}
\beta=4.
\end{equation}
The validation and test environments use the same number of observations per environment as the training environments.

\paragraph{Sensitivity analysis}
We study the effect of each simulation parameter separately. Starting from the baseline configuration, we vary one parameter at a time:
\begin{equation}
\rho\in\{0.1,0.15,0.2\},
\end{equation}
\begin{equation}
m_{\mathrm{train}}\in\{6,12,36\},
\end{equation}
\begin{equation}
n\in\{100,500,1000\},
\end{equation}
\begin{equation}
d\in\{10,50,100,1000\}.
\end{equation}

\paragraph{Training, validation, and test environments}
For every configuration, we generate three independent collections of environments: training environments, validation environments, and test environments. The training set contains $m_{\mathrm{train}}$ environments. The validation set contains $12$ new environments, and the test set contains $50$ new environments. All validation and test environments are sampled from the same environment population as the training environments.

The validation environments are used only for selecting regularization coefficients for IRM, VREx, and Fishr over a grid seach of $\lambda \in \{0.1, 1, 10\}$.

\paragraph{Models}
All methods receive only the observed covariates $x$, labels $y$, and training-environment labels $e$ when required by the method. The latent variable $z$ is used only to compute oracle reference performance. All methods use the same MLP  with one hidden layer of width $16$ and ReLU activations, followed by a $32$-dimensional linear layer. The latent dimension EBER was set to 1.

For EBER, we include the environment-classification term for estimating $\hat p(e\mid x)$. 
We evaluate two EBER variants. The first is the full EBER predictor, which uses learned environment weights $\hat p(e\mid x)$. The second is a uniform-weight ablation, which replaces the learned environment weights by uniform weights at prediction time. 

\paragraph{Oracle reference}

We report the Bayes-optimal predictor based on the observed covariates $x$:
\begin{equation}
p(y=1\mid x)
=
\int
p(y=1\mid z)p(z\mid x) dz.
\end{equation}
Here, $p(z\mid x)$ is not available in closed form since $x_1$ and $x_2$ depend on $\cos(z)$ and $\sin(z)$. We therefore compute the Bayes predictor numerically on a dense grid over $z$, using the true data-generating parameters.

\paragraph{Optimization and evaluation}
All methods are trained for $10$ epochs using Adam with learning rate $10^{-3}$ and batch size $64$. EBER predictions are estimated with $1000$ Monte Carlo samples. Expected calibration error is computed using 10 equally spaced probability bins.

\subsection{Colored MNIST} \label{sup:mnist_details}

For data generation we use $\alpha=\beta=0.95$ and set
\begin{align*}
& p(y=0\vert z=1)=p(y=0\vert z=2)=\alpha,
\qquad
p(y=1\vert z=3)=p(y=1\vert z=4)=\alpha,\\
& p(c=\mathrm{red}\vert z=1)=p(c=\mathrm{red}\vert z=3)=\beta,
\qquad
p(c=\mathrm{green}\vert z=2)=p(c=\mathrm{green}\vert z=4)=\beta.
\end{align*}

We set prototype-specific digit probabilities to
\begin{align*}
p(d\vert z=1) &= (0.30,0.22,0.22,0.10,0.08,0.06,0.02,0,0,0),\\
p(d\vert z=2) &= (0,0,0.02,0.06,0.08,0.10,0.22,0.22,0.22,0.08),\\
p(d\vert z=3) &= (0.08,0,0,0.08,0.02,0.22,0.08,0.22,0.08,0.22),\\
p(d\vert z=4) &= (0.08,0.22,0.22,0.02,0.22,0,0.10,0,0.14,0).
\end{align*}

We use six training environments with
\begin{align*}
p(z\vert e_1)&=(0.49,0.01,0.01,0.49),&
p(z\vert e_2)&=(0.45,0.05,0.05,0.45),&
p(z\vert e_3)&=(0.40,0.10,0.10,0.40),\\
p(z\vert e_4)&=(0.30,0.20,0.20,0.30),&
p(z\vert e_5)&=(0.20,0.30,0.30,0.20),&
p(z\vert e_6)&=(0.10,0.40,0.40,0.10).
\end{align*}
These mixtures induce an association between color and label in the pooled training distribution: red images are predominantly assigned label $0$, while green images are predominantly assigned label $1$. Note that this association is not produced by an environment-specific coloring rule; rather, it emerges from the stable prototype mechanisms after marginalizing over the environment-dependent mixture $p(z\vert e)$.

For the test environment we set
\begin{equation*}
p(z\vert e_{\mathrm{test}})=(0.01,0.49,0.49,0.01).
\end{equation*}

\paragraph{Implementation details}
For each training environment we generate $5000$ examples, and for the test environment we generate $10000$ examples. 

All methods use the same convolutional feature extractor. The feature extractor consists of two convolutional blocks, each with a $3\times 3$ convolution, ReLU activation, and $2\times 2$ max pooling, followed by a fully connected layer with ReLU activation. The resulting representation dimension is $32$. 

For our method, the encoder $g_\theta(x,y,e)$ takes the image, binary label, and environment index as input and outputs the mean and log-variance of a one-dimensional Gaussian variational distribution. The decoder $f^{(0)}_\phi$ maps a latent sample $z$ to a binary logit for $y$. The auxiliary model $f^{(1)}_\varphi(x,e)$ maps an image and environment index to a binary logit estimating $p_\varphi(y=1\vert x,e)$. The environment classifier $h_\psi(x)$ outputs a categorical distribution over the six training environments. Latent dimension was set to 1.

During training, for each minibatch observation $(x_i,y_i,e_i)$, we evaluate $g_\theta(x_i,y_i,e)$ for all six training environments $e\in\Etr$. We also evaluate $g_\theta(x_i,0,e)$ and $g_\theta(x_i,1,e)$ for all training environments in order to form
\begin{equation}
q_e(z\vert x_i)
=
p_\varphi(y=0\vert x_i,e)\,q_e(z\vert x_i,0)
+
p_\varphi(y=1\vert x_i,e)\,q_e(z\vert x_i,1).
\end{equation}
The estimates $\hat p_\psi(e\vert x_i)$ are obtained by applying a softmax to the output of $h_\psi(x_i)$, and
\begin{equation}
\hat p(e\vert x_i,y_i)
=
\frac{
\hat p_\psi(e\vert x_i)\, p_\varphi(y_i\vert x_i,e)
}{
\sum_{e'\in\Etr}
\hat p_\psi(e'\vert x_i)\, p_\varphi(y_i\vert x_i,e')
}.
\end{equation}

The posterior coupling term is estimated using one reparametrized sample from each $q_e(z\vert x_i,y_i)$. The prior coupling term is estimated using one sample from each mixture $q_e(z\vert x_i)$; this sample is obtained by first sampling the Bernoulli label component and then sampling from the selected Gaussian component. 

All models are trained for $10$ epochs with Adam, learning rate $10^{-3}$, batch size $128$, and gradient clipping at norm $5$. 
Cross-validation over the grid $\{0.01,0.1,1,10,100\}$, using validation NLL as the selection criterion, yielded
\begin{equation}
\lambda_{\mathrm{IRM}}=10, \qquad  \lambda_{\mathrm{VREx}}=10, \qquad \lambda_{\mathrm{Fishr}}=0.01.
\end{equation}

At test time, for each test image $x$, we compute $\hat p_\psi(e\vert x)$ for all training environments, construct $q_e(z\vert x)$ for each of them, and approximate
\begin{equation}
\hat p(y=1\vert x)
=
\sum_{e\in\Etr}
\hat p_\psi(e\vert x)
\int p_\phi(y=1\vert z)\,q_e(z\vert x) \, dz
\end{equation}
by Monte Carlo. In the final evaluation we use $1000$ samples from each Gaussian component. Accuracy is computed by thresholding $\hat p(y=1\vert x)$ at $0.5$, and calibration is measured using $10$-bin expected calibration error.

\subsection{Quasar-Star Classification}\label{sup:star_details}

We query spectroscopically confirmed objects from SDSS and retain only objects whose spectroscopic class is either \texttt{QSO} or \texttt{STAR}. The positive class corresponds to quasars and the negative class corresponds to stars.

We keep only objects with zero spectroscopic warning flag, clean photometry, primary photometric detections, and $r$-band magnitude between $14$ and $22$. 
We query at most $50{,}000$ objects before the subsequent class restriction to \texttt{QSO} and \texttt{STAR}. Rows with missing values in the retained variables are removed after feature construction. 

Each object has right ascension and declination in the ICRS coordinate system. We convert these coordinates to Galactic coordinates and compute the absolute Galactic latitude $\vert b_i\vert$. We then partition the objects into 5 quintile bins of $\vert b_i\vert$.
This construction induces environments corresponding to different sky regions, possibly leading to variation in stellar density and quasar prevalence. We select a shifted held-out environment according to this quasar prevalence.

From the ugriz magnitudes, we construct adjacent color features
\begin{equation}
u_i-g_i,\qquad
g_i-r_i,\qquad
r_i-i_i,\qquad
i_i-z_i,
\end{equation}
and additionally include the $r$-band magnitude.

We standardize the covariates using the training environments and apply the resulting transformation to the held-out environments.

Figure \ref{fig:dists_full} reports train and test standardized feature distributions.

\paragraph{Implementation detials} All methods use the same two-layer feature extractor with width $16$, ReLU activations and representation dimension $8$.

We train all methods using minibatches of size $128$ and Adam with learning rate $10^{-3}$. 
Cross-validation over the grid $\{0.01,0.1,1,10,100\}$, using validation NLL as the selection criterion, yielded
\begin{equation}
\lambda_{\mathrm{IRM}} = \lambda_{\mathrm{VREx}} = \lambda_{\mathrm{Fishr}} = 0.01.
\end{equation}
For EBER, held-out probabilities are estimated using Monte Carlo marginalization with $1000$ samples. 

\begin{figure}
    \centering
    \includegraphics[width=0.9\linewidth]{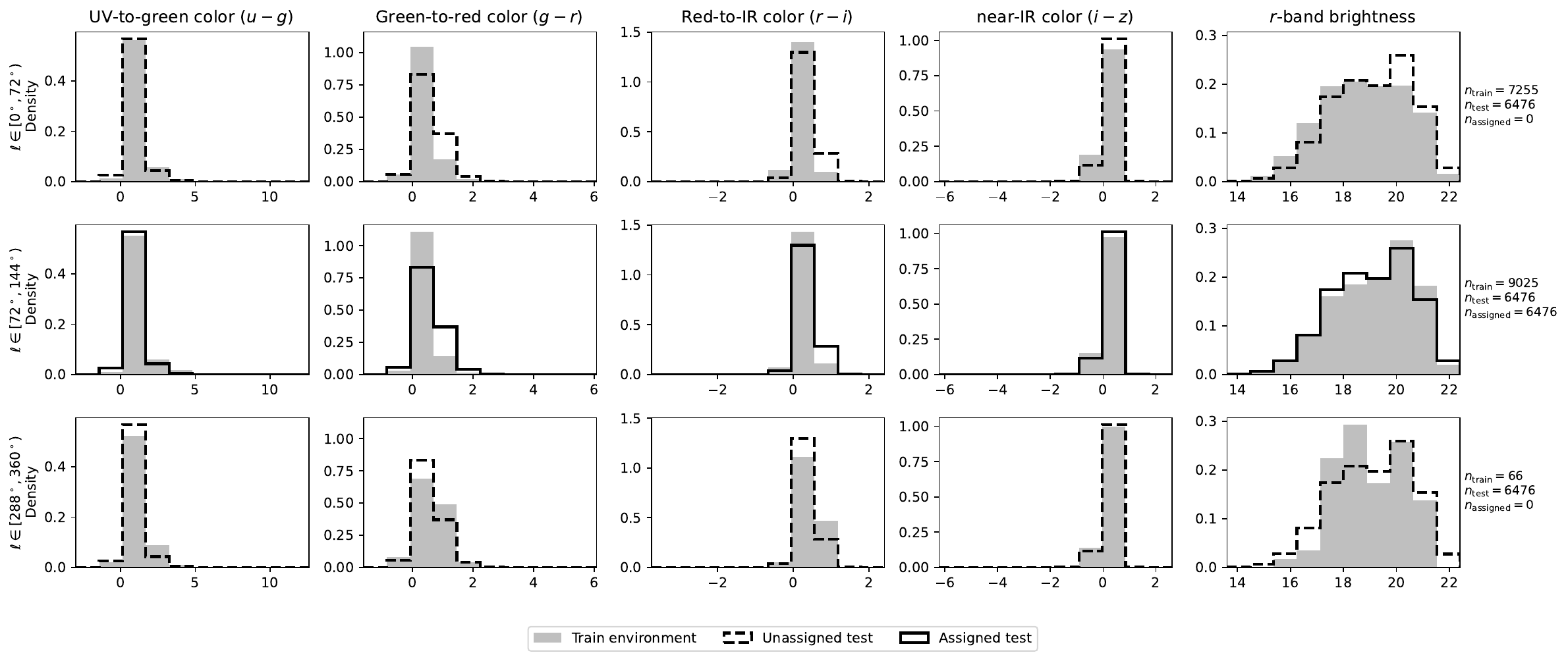}
    \caption{SDSS feature distributions by selected training environment. For all test points highest weight is assigned to environment $\ell\in[72^\circ,144^\circ)$, whose feature distribution closely matches the test distribution.}
    \label{fig:dists_full}
\end{figure}

\subsection{Microbiome classification} \label{sup:microbiom_details}

 We use the five colorectal-cancer cohorts available in the processed archive, namely \texttt{crc\_baxter}, \texttt{crc\_xiang}, \texttt{crc\_zackular}, \texttt{crc\_zeller}, and \texttt{crc\_zhao}.
 For each cohort, we load the sample metadata and the corresponding OTU table from the RDP-annotated output files.

 We exclude samples annotated as adenoma or polyp, so the task is restricted to cancer-versus-control classification.
 We preprocess the OTU tables by collapsing taxa to the genus level. For each sample, genus counts are converted to relative abundances, yielding one compositional feature vector per sample. We merge these features with the detected disease labels using the sample identifiers. After combining cohorts, missing genera are filled with zero abundance. We then retain only genera whose empirical prevalence across the combined dataset is at least $1\%$.

 We remove cohorts that do not contain both classes resulting in exclusion of \texttt{crc\_zhao} which contains only controls. Evaluation is performed on one held-out cohort.
 We train on three cohorts, \texttt{crc\_zeller}, \texttt{crc\_xiang}, and \texttt{crc\_zackular}, and evaluate on the held-out cohort \texttt{crc\_baxter}. 

Before fitting the models, genus-abundance features are standardized using the training environments only, and the same transformation is applied to the held-out test environment. 

\paragraph{Implementation details}
All methods use the same linear feature map followed by a binary classifier, with representation dimension $16$. EBER latent dimension was set to 1. We compare EBER to ERM, IRM, VREx, and Fishr. All models are trained for $50$ epochs with Adam, learning rate $10^{-3}$, and batch size $128$.
Cross-validation over the grid $\{0.01,0.1,1,10,100\}$, using validation NLL as the selection criterion, yielded
\begin{equation}
\lambda_{\mathrm{IRM}}=10, \qquad \lambda_{\mathrm{V\text{-}REx}}=100, \qquad \lambda_{\mathrm{Fishr}}=100.
\end{equation}

For EBER, held-out probabilities are estimated using Monte Carlo marginalization with $1000$ samples. 

For nearest-neighbor analysis, for each test observation $x_i$, we restrict the training data to the assigned environment $\hat e_i$ and find the nearest training observation
\begin{equation}
j(i)
=
\arg\min_{j:e_j=\hat e_i}
\Vert x_j-x_i\Vert_2.
\end{equation}
in the standardized feature space.

\subsection{Sepsis prediction}

We construct the sepsis experiment from the public PhysioNet 2019 Challenge training data. We use up to 15,000 patient files from each of the two public sources, \texttt{training\_setA} and \texttt{training\_setB}. Each file contains an hourly ICU time series for a single patient. We convert each file into one observation by summarizing the first 24 hours of the stay.

The covariates consist of demographic variables and summaries of early vital signs and laboratory measurements. For each time-varying measurement, we compute its last observed value, mean, minimum, maximum, and missingness fraction during the observation window.

We train on five source-unit environments and evaluate on a held-out source-unit environment. Specifically, the training environments are \texttt{A\_unit2}, \texttt{B\_unit1}, \texttt{A\_unit1}, \texttt{B\_unit2}, and \texttt{B\_unknown}, and the test environment is \texttt{A\_unknown}. 

After this split, we filter features using only the training environments, retaining features with at most 5\% missing values in the training data. Remaining missing values are imputed with the corresponding training-set median. We then standardize covariates using the training-set mean and standard deviation and apply the same transformation to the held-out test environment.

\paragraph{Implementation details}
All methods use the same representation network: a one-hidden-layer multilayer perceptron with hidden width 16 and output dimension 8. For EBER latent dimension was set to 1. We train all methods for 50 epochs with minibatch size 128 and learning rate $10^{-3}$. 
Cross-validation over the grid $\{0.01,0.1,1,10,100\}$, using validation NLL as the selection criterion, yielded
\begin{equation}
\lambda_{\mathrm{IRM}}=10, \qquad \lambda_{\mathrm{VREx}}=10, \qquad \lambda_{\mathrm{Fishr}}=0.1.
\end{equation}

For EBER, held-out probabilities are estimated using Monte Carlo marginalization with $1000$ samples.

\end{document}